  \providecommand\BibTeX{{%
    \normalfont B\kern-0.5em{\scshape i\kern-0.25em b}\kern-0.8em\TeX}}}
\newtheorem{theorem}{Theorem}[section]
\newtheorem*{theorem*}{Theorem}
\newtheorem*{proposition*}{Proposition}
\newtheorem*{lemma*}{Lemma}
\newtheorem*{conjecture*}{Conjecture}
\newtheorem*{fact*}{Fact}
\newtheorem*{hypothesis*}{Hypothesis}
\newtheorem*{claim*}{Claim}
\theoremstyle{definition}
\newtheorem{definition}[theorem]{Definition}
\theoremstyle{remark}
\newtheorem*{remark*}{Remark}
\newtheorem*{observation*}{Observation}
\newcommand{\gnorm}[1]{{\vert\kern-0.25ex\vert\kern-0.25ex\vert #1 
    \vert\kern-0.25ex\vert\kern-0.25ex\vert}}
\newcommand{\Psymb}{\mathbb{P}}
\DeclareMathOperator*{\ProbOp}{\Psymb}
\renewcommand{\Pr}{\ProbOp}
\newcommand{\eps}{\varepsilon}
\renewcommand{\epsilon}{\varepsilon}
\newcommand{\epsobs}{\gamma}
\newcommand{\epsun}{\eps}
\begin{document}

%%
%% The "title" command has an optional parameter,
%% allowing the author to define a "short title" to be used in page headers.
    \title{Measuring Model Fairness under Noisy Covariates: A Theoretical Perspective}

%%
%% The "author" command and its associated commands are used to define
%% the authors and their affiliations.
%% Of note is the shared affiliation of the first two authors, and the
%% "authornote" and "authornotemark" commands
%% used to denote shared contribution to the research.
\author {
    Flavien Prost,
    Pranjal Awasthi,
}
\authornote{The first two authors contributed equally.}
%\authornotemark[1]
\author {
    % Authors
    % Flavien Prost\textsuperscript{\rm *}
    % Pranjal Awasthi, \textsuperscript{\rm *}
    % Flavien Prost,
    % Pranjal Awasthi,
    Nick Blumm,
    Aditee Kumthekar,
    Trevor Potter \\
    Li Wei,
    Xuezhi Wang,
    Ed H. Chi,
    Jilin Chen,
    Alex Beutel
}
\affiliation{
    % Affiliations
  \institution{Google}
%  \streetaddress{P.O. Box 1212}
 % \city{Dublin}
  %\state{Ohio}
  \country{USA}
    % {fprost, }firstAuthor@affiliation1.com, secondAuthor@affilation2.com, thirdAuthor@affiliation1.com
}

%%
%% By default, the full list of authors will be used in the page
%% headers. Often, this list is too long, and will overlap
%% other information printed in the page headers. This command allows
%% the author to define a more concise list
%% of authors' names for this purpose.
\renewcommand{\shortauthors}{Prost et al.}

%%
%% The abstract is a short summary of the work to be presented in the
%% article.

%\documentclass[11pt, letterpaper]{article}
%\usepackge{newtxtext}

%\begin{document}

%\maketitle

\begin{abstract}
In this work we study the problem of measuring the fairness of a machine learning model under noisy information. Focusing on group fairness metrics, we investigate the particular but common situation when the evaluation requires controlling for the confounding effect of covariate variables. In a practical setting, we might not be able to jointly observe the covariate and group information, and a standard workaround is to then use proxies for one or more of these variables. Prior works have demonstrated the challenges with using a proxy for sensitive attributes, and strong independence assumptions are needed to provide guarantees on the accuracy of the noisy estimates. In contrast, in this work we study using a proxy for the covariate variable and present a theoretical analysis that aims to characterize weaker conditions under which accurate fairness evaluation is possible.
Furthermore, our theory identifies potential sources of errors and decouples them into two interpretable parts $\epsobs$ and $\epsun$. The first part $\epsobs$ depends solely on the performance of the proxy such as precision and recall, whereas the second part $\epsun$ captures correlations between all the variables of interest.
We show that in many scenarios the error in the estimates is dominated by $\epsobs$ via a linear dependence, whereas the dependence on the correlations $\epsun$ only constitutes a lower order term. 
As a result we expand the understanding of scenarios where measuring model fairness via proxies can be an effective approach. Finally, we compare, via simulations, the theoretical upper-bounds to the distribution of simulated estimation errors and show that assuming some structure on the data, even weak, is key to significantly improve both theoretical guarantees and empirical results.

\end{abstract}

\copyrightyear{2021}
\acmYear{2021}
%\setcopyright{rightsretained}
\acmConference[AIES '21]{Proceedings of the 2021 AAAI/ACM Conference on AI, Ethics, and Society}{May 19--21, 2021}{Virtual Event, USA}
\acmBooktitle{Proceedings of the 2021 AAAI/ACM Conference on AI, Ethics, and Society (AIES '21), May 19--21, 2021, Virtual Event, USA}\acmDOI{10.1145/3461702.3462603}
\acmISBN{978-1-4503-8473-5/21/05}

%%
%% The code below is generated by the tool at http://dl.acm.org/ccs.cfm.
%% Please copy and paste the code instead of the example below.
%%
\begin{CCSXML}
<ccs2012>
<concept>
<concept_id>10010147.10010257</concept_id>
<concept_desc>Computing methodologies~Machine learning</concept_desc>
<concept_significance>500</concept_significance>
</concept>
</ccs2012>
\end{CCSXML}

\ccsdesc[500]{Computing methodologies~Machine learning}
%%
%% Keywords. The author(s) should pick words that accurately describe
%% the work being presented. Separate the keywords with commas.
\keywords{ML fairness, Statistical parity, Noisy covariates.}

%% A "teaser" image appears between the author and affiliation
%% information and the body of the document, and typically spans the
%% page.

%%
%% This command processes the author and affiliation and title
%% information and builds the first part of the formatted document.
\maketitle

\section{Introduction}
\label{sec:intro}
As machine learning~(ML) systems permeate everyday life it is of utmost importance to understand and correct for the underlying biases present in such systems. As a result there has been a significant interest in recent years in the area of {\em algorithmic fairness}~\citep{kleinberg2017, chouldechova2017fair, dieterich2016}. An important and often overlooked aspect of addressing biases in AI systems is the challenge of how to actually compute the fairness metric in practice. Given an ML model, fairness metrics typically involve evaluating the disparity in the model performance across different demographics and/or data slices of interest, according to a well defined metric such as {\em false positive rate}, {\em calibration} etc. However, often these criteria cannot be exactly evaluated as they require information that is only sparsely available in practical applications. How should one approach measuring fairness metrics in such situations? Our goal is to address this question from a theoretical perspective.

We focus on scenarios that involve measuring group fairness criteria where a practitioner wants to account for a covariate that is not available jointly with the group information. This challenge arises in a variety of applications.  For example, consider the case of evaluating a model predicting the toxicity of an online comment, where we want to measure differences in performance of the model across toxic comments directed at different demographic groups, but we only want to compare performance within the same topic or language \citep{Dixon,DBLP:journals/corr/HosseiniKZP17, mubarak-etal-2017-abusive, park-etal-2018-reducing}. In order to measure the performance of the model, one would ideally want access to a corpus of comments, each containing information about the demographic group, topic or language, and the prediction of the model for that comment. However, in practice such joint observations are rarely available. 
%Often topic or language are not hand-labeled in the training corpus. 
In this case, one would frequently use a model to predict the covariate~(topic or language) for each comment, but might wonder how inaccuracies in this model of topic or language influence our measurement of a bias in the toxicity prediction.
Another common occurrence is in the context of a recommender system, where a model predicts whether a user would click an item, but a practitioner might want to only consider fairness over high quality items, such as avoiding clickbait or focusing on when the user would actually be satisfied with the item~\citep{akoglu2010oddball, horne2017just, kumar2018false, WatchNext}. Again, it is unlikely that a dataset contains together clicks, item group information, and quality or post-click engagement information; can a practitioner use a model of item quality to accurately estimate the bias in the recommender? Across all of these examples it's important to understand the confidence in our measurements.

Group fairness has received significant attention in recent years resulting in a variety of natural fairness criteria~\citep{barocas-hardt-narayanan} that machine learning systems should satisfy. 
Our work concerns two common variants, {\em Statistical Parity}~\citep{dwork2012fairness} and {\em Equality of opportunity}~\citep{hardt2016equality}, which measure the difference of positive rates (or true positive rates) across sub-populations. To model the presence of the confounding or mediating covariate as described in the applications above, one can extend fairness metrics by conditioning on this covariate and this approach has been used in a variety of prior works~\citep{ritov2017conditional, chouldechova2017fair, kilbertus2017avoiding, hebert2017calibration, kearns2018preventing, beutel2019putting}. %(ADD CITATION). 

We summarize our targeted fairness metric as follows. Consider a machine learning system making a prediction $y \in \{0,1\}$, a sensitive attribute $\ell \in \{0,1\}$ and a covariate $v \in \{0,1\}$. The {\em Conditional Statistical Parity}~\citep{Corbett17} metric is defined as: 
\begin{align}
\label{eq:conditional-SP}
G_{SP} \!=\! \Pr [y\!=\!1 | v\!=\!1, \ell\!=\!0] \!-\! \Pr[y\!=\!1 |v\!=\!1, \ell\!=\!1 ].     
\end{align}
Similarly, let $y^* \in \{0,1\}$ be the ground truth label. Then the {\em Conditional Equal Opportunity} metric is defined as:
\begin{align}
\label{eq:conditional-EO}
G_{EO} = &\Pr[y=1 | y^*=1, v=1, \ell=0] \nonumber \\
&\;\;\;\;- \Pr[y=1 | y^*=1, v=1, \ell=1].
\end{align}
For ease of exposition and reducing notation, the rest of the paper focuses on the metric defined in \eqref{eq:conditional-SP}. \emph{However, all of our theoretical results trivially generalize to the case of conditional equal opportunity as well by focusing on sub-populations where $y^*=1$.} 

While it is important to incorporate legitimate covariates in fairness metrics, evaluating the metrics accurately requires a large dataset that contains both the group label $\ell$, the covariate information $v$ and possibly the label $y^*$~(in the case of equal opportunity). As we discussed before, in practice we might only have an estimate $\hat{v}$ derived from either a proxy attribute in the data that is correlated with $v$, or a classifier trained to predict $v$. A natural question that comes up is whether one can output 
\begin{align}
\hat{G}_{SP} \!=\! \Pr[y\!=\!1 | \hat{v} \!=\! 1, \ell\!=\!0] \!-\! \Pr[y\!=\!1 | \hat{v} \!=\! 1, \ell\!=\!1]
\end{align} 
as an estimate of the bias $G_{SP}$ of the model as in \eqref{eq:conditional-SP}. 

For the sake of clarity, let's return to our example from above of a toxicity model for online comments, which predicts whether a comment contains toxic language ($y=1$ if toxic) and for which prior works have demonstrated that these models often exhibit biases towards certain demographics~\citep{Dixon, DBLP:journals/corr/HosseiniKZP17, mubarak-etal-2017-abusive, park-etal-2018-reducing}. At the same time, the level of toxicity might be significantly influenced by the topic of the comment (i.e., topic is covariate $v$) and we may want to control for its impact in our fairness analysis. Following the Statistical Parity approach, a reasonable fairness metric would be given by Eq. \eqref{eq:conditional-SP}\footnote{Note, in practice one might prefer to follow the equality of opportunity metric for this or other applications.  We use the conditional statistical parity metric throughout the paper for simplicity of notation, but our results easily extend to the conditional equality of opportunity metric as well, and we encourage practitioners to always consider which metric fits best with their application.}. 
As both topic and demographic information are typically available for only a small portion of examples, a practitioner might not have access to the joint distribution of ($v$, $l$) but a topic classifier $\hat{v}$ might be available. In that situation, how confident can a practitioner be to use $\hat{G}_{SP}$ as an estimation of its intended metric $G_{SP}$?

% START FLAVIEN SIMPLIFY HERE.
In the rest of the paper we study conditions under which we can bound the estimation error $|G_{SP} - \hat{G}_{SP}|$. In Section~\ref{sec:simple-cases}, we highlight through simple examples two different sources that will affect the estimation error: (a) the performance of the proxy~($\hat{v}$) and, (b) the joint correlations between variables ($y, v$ and $\hat{v}$) that can be represented as the value of the outcome value $y$ over the confusion matrix of ($v, \hat{v}$). The bulk of the theory aims to derive error bounds based on the extent to which the data distribution satisfies certain conditions that depend on the above two sources. Throughout the paper, parameters related to the classifier performance will be captured by $\epsobs$, and parameters related to the joint correlations between variables $y,v$ and $\hat{v}$ will be captured by $\epsun$. These parameters will always take values in $[0,1]$.

In Section~\ref{sec:bound-classifier}, we derive a bound on the estimation error based solely on the precision and recall of the proxy $\hat{v}$. If $1-\epsobs_A$ is the lower bound on the worst case precision or recall of the proxy on any data slice, i.e., conditioned on $\ell=0$ or $\ell=1$, then we show that the error $|G_{SP} - \hat{G}_{SP}|$ can be bounded by $2 \cdot \epsobs_A$. However, $2 \cdot \epsobs_A$ may not always be small enough for fairness evaluation and our goal will be to progressively refine this bound by exploiting other correlations between variables ($y, v, \hat{v}$ and $\ell$).

In Section~\ref{sec:bound-graph}, we identify two cases depending on correlations  which might lead to ``errors canceling each other''. In the first case (B1), if the precision and recall of $\hat{v}$ are $\epsobs_{B1}$-close to each other (for $\ell=0,1$), then we show that the error is bounded by a linear combination of two terms $\epsobs_{B1}$ and $\epsun_{B1}$. The term $\epsun_{B1}$ captures how the variables of interest, i.e., $y, v$ and $\hat{v}$ correlated within a data slice~($\ell=0$ or $\ell=1$). 
In the second case (B2), we study how the error behaves as a function of closeness of precision across slices ($\ell= 0$ vs $\ell=1$), and recall across slices. In this case we again show that the error depends on a linear combination of two terms $\epsobs_{B2}$ and $\epsun_{B2}$, one capturing relative closeness of precision and recall, and the other how much the correlations between $y, v$ and $\hat{v}$ vary between $\ell=0$ and $\ell=1$.

Having identified the various sources of errors above, in Section~\ref{sec:combining-ABC}, we provide a refined bound that captures how the various errors interact with each other. As a result we show in Theorem~\ref{thm:combine-ABC} that the final estimation error depends linearly in the $\epsobs$ parameters and quadratically in $\epsun$ parameters. Hence, the various errors due to correlations have a multiplying effect leading to a lower order term. Our analysis thereby reveals that the properties of the proxy classifier such as precision and recall play a more important role in the final estimation error, but that the structure of the correlations through $\epsun$ can help reduce the bound on $|G - \hat{G}|$.

Finally, we use a simulation framework to compare our theoretical bounds on estimation errors to their simulated counterparts. We show that our worst-case bounds are met when no assumption is imposed on the data generation process. Furthermore, we show how enforcing some (even weak) structure on the correlations~(through the parameter $\epsun$) can significantly improve both theoretical bounds and empirical distributions, compared to when we rely only on precision and recall.

Our work is similar in spirit to prior works on studying the effect of label noise~(outcome variable $y^*$) on fairness metrics~\citep{fogliato2020fairness, jiang2020identifying}, as well as noise in the sensitive attribute $\ell$\citep{awasthi2021faact, chen2019fairness, kallus2020}. The case of label noise is captured by our theoretical analysis as one can simply take the noisy covariate to be $y^*$. The case of noise in the sensitive attributes is not directly comparable to our setting. We discuss this more in Section~\ref{sec:related}.

The rest of the paper is structured as follows. In Section~\ref{sec:related} and Section~\ref{sec:notation} we review related work and discuss preliminaries. In Section~\ref{sec:simple-cases} we discuss several simple cases where accurate bias estimation is possible provided certain independence assumptions hold. This section is meant to serve as a warm up to help the reader ease into the notation.
In Section~\ref{sec:bound-classifier}, we build our main theory by first bounding the estimation error solely based on the performance of the $\hat{v}$ classifier, and next identifying, in Section~\ref{sec:bound-graph}, two other conditions that affect the estimation error. We present our main theorem in Section~\ref{sec:combining-ABC} that shows that when all the conditions hold true to different extents, the estimation error can be decomposed into a linear part that captures the properties of the proxy and a quadratic part that captures the correlations. We conclude with experiments on simulated data in Section~\ref{sec:experiments} and discuss conclusions in Section~\ref{sec:conclusions}.

\textbf{Note:} We would like to point out that in this work we are focused on understanding the theoretical underpinnings of using proxies in evaluation, and \emph{do not} advocate always using the approach of measuring the bias of model via a proxy attribute. As pointed out in prior works~\citep{awasthi2021faact} the approach could have unintended side effects on the evaluation, and as such practitioners should carefully consider the risks of using such an approach.  Our goal is to identify theoretically how errors propagate in the analysis to affect the final outcome. We hope that our work will help practitioners make more informed choices.

\section{Related Work}
\label{sec:related}
We focus on conditional metrics such as conditional statistical parity and conditional equal opportunity. Several recent works have argued the use of such metrics over or in addition to their un-conditioned counterparts. For example, the works of~\citet{ritov2017conditional, Corbett17,chouldechova2017fair} all explore conditioning on the number of prior convictions when measuring the bias of risk assessment tools for recidivism.
%, it is important to condition on the number of prior convictions of a defendant. 
This naturally leads to the conditional statistical parity and equality of opportunity metrics. 
The work of~\citet{beutel2019putting} proposes the use of the conditional equal opportunity metric to account for differences in label confidence. 
%The notion of calibration as studied in~\citet{chouldechova2017fair} naturally corresponds to a conditional metric where the covariate is the score of a classifier. 
Recent works on intersectional fairness consider metrics that involve conditioning on several covariates or arbitrary data slices described by functions of low VC-dimension~\citep{kearns2018, hebert2017calibration}.

The problem of assessing unfair biases and training of fair machine learning models for the above discussed conditional metrics, under noisy label or covariate information has started to receive attention from the research community in recent years. Here we discuss the works most relevant in the context of our results. 

\paragraph{Evaluating under uncertainty}
The work of~\citet{chen_fairness_under_unawareness} studies estimating the {\em mean demographic disparity} when one has noisy information about the sensitive attribute. They analyze proxies based on threshold based estimators and present conditions under which the method over-estimates or under-estimates the true metric. However, unless strong independence assumptions are made there is no guarantee that the estimation error will be small. The work of~\citet{kallus2020} extends this to other fairness metrics such as equality of opportunity and presents methods to construct uncertainty intervals around the true value of the metric of interest. Again, excluding certain independence assumptions the provided intervals can be large. 

The recent work of~\citet{awasthi2021faact} studies what properties of the classifier used to construct the proxy affect the estimation error. They point out that accuracy of the classifier alone is not enough and advocate the use of active learning based algorithms to deal with uncertain information. We would like to point out that the above mentioned works consider noise in the sensitive attribute and are not directly comparable to our setting of noisy covariates. At a technical level, when the noisy variable $\ell$ is also the one for which we want to access performance gap, i.e., $\ell=0$ vs. $\ell=1$, it is not easy to decouple the various sources of error as we do in the case of noisy covariates. As a result we find that in our setting one can get reasonably small bounds even without strong independence assumptions.
 
The recent work of~\citet{fogliato2020fairness} studies how noise in the observed labels~(the outcome variable) affects estimates of fairness metrics such as false positive rates, false negative rates, and positive predicted value. In principle the setting of label noise can be captured by our theory. However, the specific assumptions used make our work not directly comparable with that of~\citet{fogliato2020fairness}. We provide error bounds that depend on both the precision and recall, as well as other correlations between the variables. The authors in~\citet{fogliato2020fairness} only consider how the estimation error depends on the precision~(i.e., the noise rate). As a result they need to make strong assumptions such as one-sided noise~($y^*=1$), and noise affecting only one demographic group. 
% Under these assumptions the authors show that the observed metrics and the true metrics must satisfy some constraints, thereby ruling out certain outcomes. However, in general there is no guarantee that the observed metrics will be close to the true ones. 
We show on the other hand that by modeling other natural quantities of interest such as recall and correlations, one can avoid independence assumptions and at the same time obtain estimation error guarantees.

\paragraph{Training under uncertainty}
Beyond measuring fairness metrics, a recent line of work has also studied the problem of training fair classifiers under noise in labels and/or sensitive attributes. The work of~\citet{jiang2020identifying} proposes a method to train a fair classifier via a re-weighting scheme given access to noisy labels. The works of~\citet{zafar2017,lamy2019, gupta2018} show how to incorporate fairness constraints during training when there is noise in the sensitive attributes. Relatedly in training approaches, \citet{Hashimoto2018,lahoti2020fairness} study the extreme scenario where sensitive attributes are entirely unavailable and propose re-weighting approaches to boost low-performing groups; \citet{coston2019} apply transfer learning techniques to overcome missing sensitive attributes. Further recent work studies the tradeoff between fairness and accuracy and how it is affected by label noise \citep{wick2019unlocking}. 
%inally, the use of transfer learning approaches has also been shown to be effective way to overcome missing data as well as noise in labels \citep{coston2019}.

\section{Notation and Preliminaries}
\label{sec:notation}
As stated before, we will focus on the conditional statistical parity metric defined in Eq.~\eqref{eq:conditional-SP}. However all our results continue to hold for the conditional equal opportunity metric, Eq.~\eqref{eq:conditional-EO},  as well. Let $\ell,v,y$ denote $\{0,1\}$-valued random variables where $\ell$ is the sensitive attribute, $v$ is the relevant covariate and $y$ is the predicted outcome. Our goal is to approximate
\begin{align}
    \label{eq:ideal-bias}
    G & \coloneqq \Pr[y\!=\!1 | v\!=\!1, \ell\!=\!1] - \Pr[y\!=\!1 | v\!=\!1, \ell\!=\!0].
\end{align}
by computing
\begin{align}
    \label{eq:computed-bias}
    \hat{G} & \coloneqq \Pr[y\!=\!1 | \hat{v}\!=\!1, \ell\!=\!1] - \Pr[y\!=\!1 | \hat{v}\!=\!1, \ell\!=\!0].
\end{align}
We want to characterize conditions under which the error in our estimates, i.e., $|G - \hat{G}|$ can be bounded. We introduce below notations for two important parameters of our system.

\noindent \textbf{Classifier Performance.} We define the conditional precision and recall of the proxy $\hat{v}$ that will play an important role in our analysis.
\begin{align}
\label{def:cond-precision-recall}
    \Pr[{v}\!=\!1 | \hat{v}\!=\!1, \ell\!=\!l] &= 1-p_l\\
    \Pr[\hat{v}\!=\!1 | v\!=\!1, \ell\!=\!l] &= 1-r_l.
    %\Pr[\hat{v}=1 | v=1, \ell=1] &= 1-r_1\\
    %\Pr[{v}=1 | \hat{v}=1, \ell=1] &= 1-p_1.\\
\end{align}
Constraints on the classifier perfomance will be captured by the variable $\epsobs$.

\noindent \textbf{Outcome $y$ over the confusion matrix of $\hat{v}$.}
We will also study how correlations among the variables $y,v,\hat{v}$ affect the error in the measurement of $G$. In later sections we will quantify these correlations in terms of the value of the outcome $y$ over the confusion matrix of ($v,\hat{v}$) conditioned on $\ell=0,1$ that we define in Table~\ref{tbl:confusion-matrix} for group $\ell=l$. For the sake of brevity we will overload notation and use the term confusion matrix to refer to the Table above. Constraints on the confusion matrix will be captured by the variable $\epsun$.

\begin{table}
\centering
\begin{tabular}{|c|c|}
\hline
 $\Pr[y\!=\!1 | v\!=\!0, \hat{v}\!=\!1, \ell\!=\!l]$ & $\Pr[y\!=\!1 | v\!=\!1, \hat{v}\!=\!1, \ell\!=\!l]$\\
 \hline
 $\Pr[y\!=\!1 | v\!=\!0, \hat{v}\!=\!0, \ell\!=\!l]$ & $\Pr[y\!=\!1 | v\!=\!1, \hat{v}\!=\!0, \ell\!=\!l]$\\
 \hline
\end{tabular}
    \caption{\label{tbl:confusion-matrix} Value of the outcome $y$ over the confusion matrix of $v,\hat{v}$, conditioned on group $\ell=l$.}
    %All the four probabilities are conditioned on the even $\ell=0$. A similar confusion matrix can be defined conditioned on $\ell=1$.}
\end{table}

\section{Simple cases}
\label{sec:simple-cases}
Before proceeding to our main analysis in the next sections, we state a few simple scenarios where accurate bias estimation via $\hat{v}$ is possible. The proofs from this section can be found in the Appendix.

\subsection{Case 1: $y \perp \{v,\hat{v}\} | \ell$.}  

This condition implies that $y$ is independent of the covariate~($v$) and the proxy~($\hat{v}$) when conditioned on $\ell$. If the variables $y,v$ and $\hat{v}$ are viewed as nodes in a graphical model~\citep{wainwright2008graphical}, then the above assumption encodes the graph structure in Figure~\ref{fig:gm2}. 
\begin{figure}[htbp]
    \centering
    \includegraphics[width=0.3\textwidth]{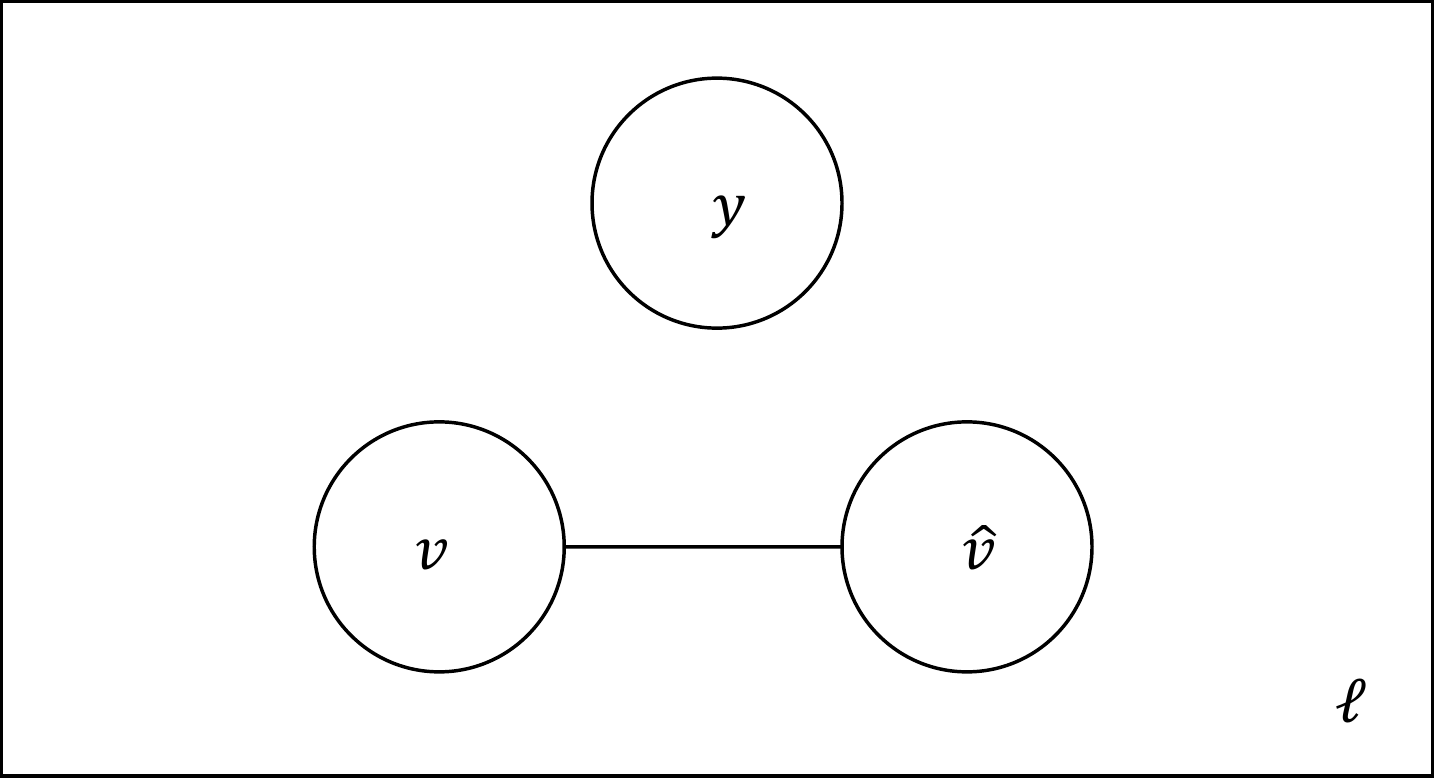}
    \caption{A graphical model over three random variables under the assumption that $y$ is independent of $\hat{v}$ and $v$. All the events are conditioned on $\ell$.}
    \label{fig:gm2}
\end{figure}
In this case we prove the following theorem:
\begin{theorem}
\label{thm:strong-independence}
If the independence assumption encoded in Figure~\ref{fig:gm2} holds, then
$$
G  = \hat{G}
$$
\end{theorem}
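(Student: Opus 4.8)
The plan is to observe that the conditional independence assumption encoded in Figure~\ref{fig:gm2} collapses both the ideal metric $G$ in \eqref{eq:ideal-bias} and the proxy-based metric $\hat G$ in \eqref{eq:computed-bias} to one and the same expression, namely a difference that involves only $\Pr[y=1\mid \ell=l]$ and no longer sees $v$ or $\hat v$ at all. Once both quantities are rewritten in this common form, equality is immediate.

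First I would invoke the defining property of conditional independence. Since $y \perp \{v,\hat v\}\mid \ell$, in particular $y \perp v \mid \ell$, so for each $l \in \{0,1\}$ (and assuming the conditioning events have positive probability) we have
$$
\Pr[y=1 \mid v=1, \ell=l] = \Pr[y=1 \mid \ell=l].
$$
The very same assumption also yields $y \perp \hat v \mid \ell$, hence
$$
\Pr[y=1 \mid \hat v=1, \ell=l] = \Pr[y=1 \mid \ell=l].
$$
In words, conditioning additionally on $v=1$ or on $\hat v=1$ is vacuous once we have conditioned on $\ell$.

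Substituting the first identity into \eqref{eq:ideal-bias} and the second into \eqref{eq:computed-bias}, both metrics reduce to the \emph{same} difference,
$$
G = \Pr[y=1 \mid \ell=1] - \Pr[y=1 \mid \ell=0] = \hat G,
$$
which is exactly the claim. The only point requiring care—and the closest thing to an obstacle here—is making sure the conditioning events $\{v=1,\ell=l\}$ and $\{\hat v=1,\ell=l\}$ have positive probability so that the conditional probabilities are well defined; under the nondegeneracy implicit in the setup this is automatic, and no further estimation is needed since the two sides coincide exactly rather than merely approximately.
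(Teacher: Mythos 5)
Your proof is correct and follows essentially the same route as the paper: the paper defines $\delta_l = \Pr[y\!=\!1 \mid v\!=\!1,\ell\!=\!l] - \Pr[y\!=\!1 \mid \hat v\!=\!1,\ell\!=\!l]$ and observes these vanish under the independence assumption, which is exactly your observation that conditioning on $v=1$ or $\hat v=1$ becomes vacuous given $\ell$. Your version merely makes explicit the intermediate quantity $\Pr[y\!=\!1\mid \ell\!=\!l]$ that the paper's ``it is easy to see'' step implicitly routes through.
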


The theorem is intuitive as the independence assumption makes conditioning on either $v$ or $\hat{v}$ irrelevant. A scenario where the assumption will hold is when the outcome $y$ is completely determined by the value of $\ell$, the sensitive attribute. This is typically too strong a condition to hold in practice.

\subsection{Case 2: $y \perp \hat{v} | v,\ell$.} 
Next we consider our first non-trivial case where $y$ is independent of the proxy $\hat{v}$ when conditioned on $\ell$ and $v$. This assumption is encoded in Figure~\ref{fig:gm3}.
\begin{figure}[htbp]
    \centering
    \includegraphics[width=0.3\textwidth]{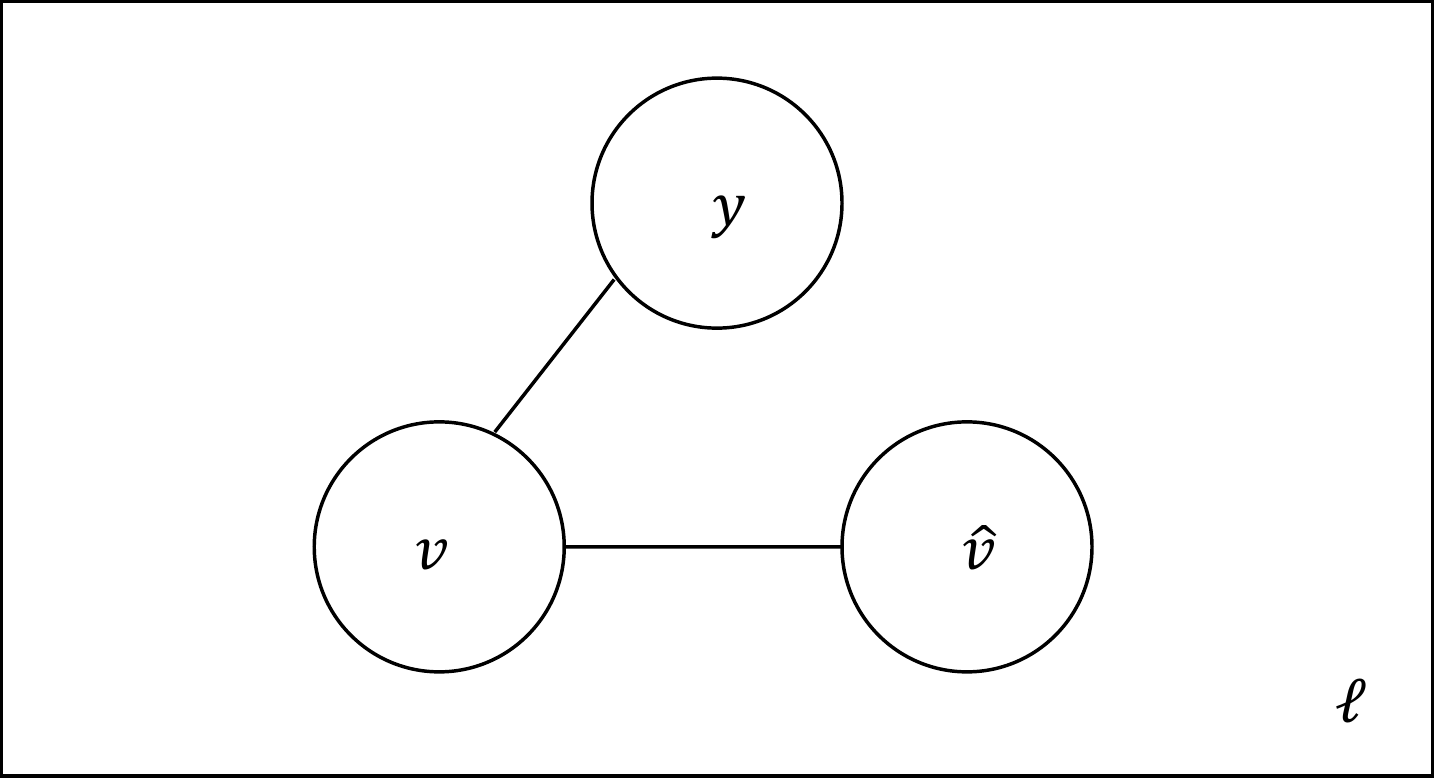}
    \caption{A graphical model over three random variables under the assumption that $y$ is independent of $\hat{v}$ conditioned on $v$. All the events are conditioned on $\ell$.}
    \label{fig:gm3}
\end{figure}

We will show that if the proxy $\hat{v}$ has high conditional precision $p_l$, then we will get good estimates via $\hat{G}$.

\begin{theorem}
\label{thm:high-precision}
If the independence assumption encoded in Figure~\ref{fig:gm3} holds, and for any $\ell \in \{0,1\}$ the conditional precision of the proxy $\hat{v}$ is at least $1-p$, then we have that
$$
|G - \hat{G}| \leq 2p.
$$
\end{theorem}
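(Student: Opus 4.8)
The plan is to expand the proxy-conditioned positive rate $\Pr[y\!=\!1 \mid \hat{v}\!=\!1, \ell\!=\!l]$ via the law of total probability over the true covariate $v$, and then exploit the conditional independence $y \perp \hat{v} \mid v, \ell$ encoded in Figure~\ref{fig:gm3} to eliminate the dependence on $\hat{v}$. Concretely, I would write
\begin{align*}
\Pr[y\!=\!1 \mid \hat{v}\!=\!1, \ell\!=\!l] = \sum_{v \in \{0,1\}} \Pr[y\!=\!1 \mid v, \hat{v}\!=\!1, \ell\!=\!l]\,\Pr[v \mid \hat{v}\!=\!1, \ell\!=\!l].
\end{align*}
The independence assumption collapses $\Pr[y\!=\!1 \mid v, \hat{v}\!=\!1, \ell\!=\!l]$ to $\Pr[y\!=\!1 \mid v, \ell\!=\!l]$, while the definition of conditional precision in \eqref{def:cond-precision-recall} supplies $\Pr[v\!=\!1 \mid \hat{v}\!=\!1, \ell\!=\!l] = 1-p_l$ and $\Pr[v\!=\!0 \mid \hat{v}\!=\!1, \ell\!=\!l] = p_l$.

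Writing $a_l \coloneqq \Pr[y\!=\!1 \mid v\!=\!1, \ell\!=\!l]$ and $c_l \coloneqq \Pr[y\!=\!1 \mid v\!=\!0, \ell\!=\!l]$, the step above yields $\Pr[y\!=\!1 \mid \hat{v}\!=\!1, \ell\!=\!l] = (1-p_l)a_l + p_l c_l = a_l - p_l(a_l - c_l)$, so the per-slice discrepancy between the true and proxy positive rates is exactly $p_l(a_l - c_l)$. Subtracting the defining expressions \eqref{eq:ideal-bias} and \eqref{eq:computed-bias}, the leading $a_1 - a_0$ terms cancel and I am left with the clean identity
\begin{align*}
G - \hat{G} = p_1(a_1 - c_1) - p_0(a_0 - c_0).
\end{align*}
To conclude, I would apply the triangle inequality and bound each factor separately: since $a_l, c_l$ are probabilities we have $|a_l - c_l| \le 1$, and by hypothesis $p_l \le p$ for both $l \in \{0,1\}$, whence $|G - \hat{G}| \le p_1|a_1 - c_1| + p_0|a_0 - c_0| \le 2p$.

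There is no genuinely hard step here; the result is a short computation once the conditional independence is applied correctly. The point demanding the most care is the bookkeeping in the cancellation — one must track that the per-slice error is a \emph{signed} quantity $p_l(a_l - c_l)$ that need not cancel across the two slices, which is precisely why the triangle inequality (rather than a single combined difference) is the appropriate tool and why the factor of $2$ appears. I would also flag that this bound is intentionally loose: it discards the correlation gap $|a_l - c_l|$ by bounding it crudely by $1$. This is exactly the slack that the later analysis tightens, since those gaps are the correlation quantities eventually captured by $\epsun$ in Sections~\ref{sec:bound-graph} and~\ref{sec:combining-ABC}.
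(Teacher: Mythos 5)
Your proof is correct and takes essentially the same route as the paper's: both expand $\Pr[y\!=\!1 \mid \hat{v}\!=\!1, \ell\!=\!l]$ over $v$ by the law of total probability, use the conditional independence $y \perp \hat{v} \mid v,\ell$ to drop $\hat{v}$, bound each per-slice error by $p$, and combine across slices with the triangle inequality to get the factor of $2$. Your exact identity $\delta_l = p_l(a_l - c_l)$ is merely a cleaner packaging of the paper's two one-sided bounds on $\delta_l$ and $-\delta_l$ (whose appendix proof, incidentally, writes $r$ where $p$ is meant).
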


It is interesting to note that the bound does not depend on the recall of the proxy $\hat{v}$. To get an intuition behind this, notice that the errors in the estimate of $G$ involve both {\em false positives} ($v=0, \hat{v}=1$) and {\em false negatives} ($v=1, \hat{v}=0$). Indeed, we can write each term as:

\begin{align}
\begin{split}
\label{eq:intuition1}
\Pr[y\!=\!1 | v\!=\!1, \ell\!=\!0] &= (1-r_0) \Pr[y\!=\!1 | \hat{v}\!=\!1, v\!=\!1, \ell\!=\!0] \\
    &\;\;\;\; + r_0 \Pr[y\!=\!1 | \hat{v}\!=\!0, v\!=\!1, \ell\!=\!0] 
\end{split}
\end{align}

\begin{align}
\begin{split}
\label{eq:intuition2}
\Pr[y\!=\!1 | \hat{v}\!=\!1, \ell\!=\!0] &= (1-p_0) \Pr[y\!=\!1 | \hat{v}\!=\!1, v\!=\!1, \ell\!=\!0]\\
    &\;\;\;\; + p_0 \Pr[y\!=\!1 | \hat{v}\!=\!1, v\!=\!0, \ell\!=\!0].
\end{split}
\end{align}

The estimation error will be small if the two terms are close (similarly for $\ell=1$).

Due to the independence assumption, we have that $\Pr[y=1 | v=1, \hat{v}=0, \ell=0] = \Pr[y=1 | v=1, \hat{v}=1, \ell=0]$.  Therefore the first equation simply becomes $\Pr[y=1 | \hat{v}=1, v=1, \ell=0]$ and the dependency on $r_0$ disappears. Only false positives ($v=0, \hat{v}=1$) dictate the errors which is controlled by the precision of the proxy $\hat{v}$. 

One may expect the above condition to hold if the classifier for $\hat{v}$ is trained on a different independent data set and hence its errors may not be correlated with $y$. As an extreme case, the condition would hold if for any data point, $\hat{v}$ has a $1-p$ chance of being correct, independently of other points.

\subsection{Case 3: $y \perp {v} | \hat{v},\ell$.} 
Finally, the third case that we consider in this section is the complementary case where $y$ is independent of ${v}$ when conditioned on $\ell$ and $\hat{v}$. This is encoded in Figure~\ref{fig:gm4}.
\begin{figure}[htbp]
    \centering
    \includegraphics[width=0.3\textwidth]{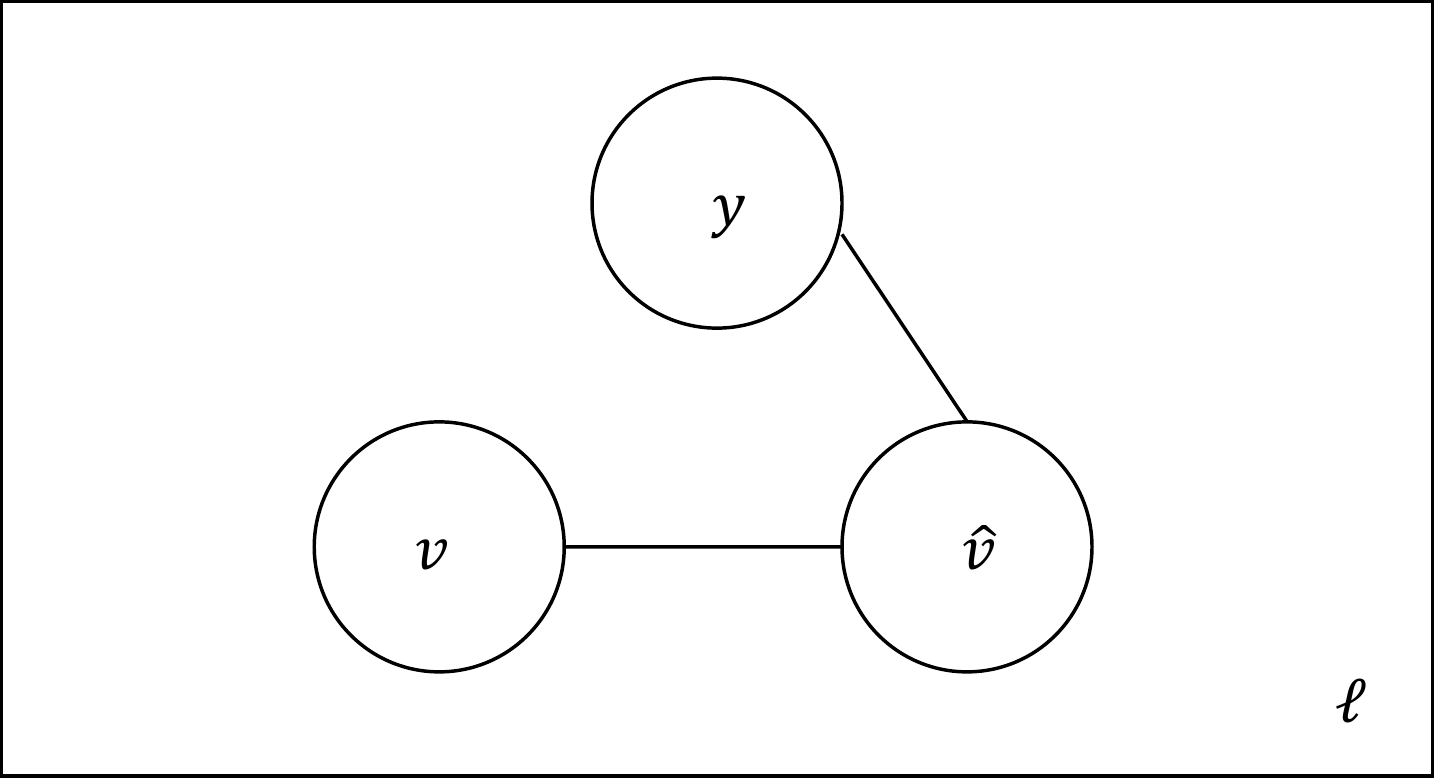}
    \caption{A graphical model over three random variables under the assumption that $y$ is independent of ${v}$ conditioned on $\hat{v}$. All the events are conditioned on $\ell$.}
    \label{fig:gm4}
\end{figure}
In this case we will show that if the proxy $\hat{v}$ has high recall~($\Pr[\hat{v}=1 | {v}=1, \ell]$) then we will get good estimates via $\hat{G}$. 
\begin{theorem}
\label{thm:high-recall}
If the independence assumption encoded in Figure~\ref{fig:gm4} holds, and for any $\ell \in \{0,1\}$ the conditional recall of the proxy $\hat{v}$ is at least $1-r$, then we have that:
$$
|G - \hat{G}| \leq 2r.
$$
\end{theorem}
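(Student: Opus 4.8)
The plan is to mirror the proof of Theorem~\ref{thm:high-precision}, exploiting the fact that Case 3 is the exact complement of Case 2 under the exchange of $v$ with $\hat{v}$ and of precision with recall. Concretely, I would show that for each group the discrepancy between the true and estimated terms is controlled linearly by $r_l$, in the same way it was previously controlled by $p_l$, and then combine the two groups via the triangle inequality.

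First, for each fixed $l \in \{0,1\}$, I would expand the ``true'' term $\Pr[y=1 | v=1, \ell=l]$ by the law of total probability over $\hat{v}$, exactly as in Eq.~\eqref{eq:intuition1}, obtaining $\Pr[y=1 | v=1, \ell=l] = (1-r_l)\Pr[y=1 | \hat{v}=1, v=1, \ell=l] + r_l\, \Pr[y=1 | \hat{v}=0, v=1, \ell=l]$, where the coefficients come from the recall definition in \eqref{def:cond-precision-recall}. Next I would invoke the independence assumption $y \perp v \mid \hat{v}, \ell$ encoded in Figure~\ref{fig:gm4}, which gives $\Pr[y=1 | \hat{v}=b, v=1, \ell=l] = \Pr[y=1 | \hat{v}=b, \ell=l]$ for both $b \in \{0,1\}$. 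Substituting and subtracting $\Pr[y=1 | \hat{v}=1, \ell=l]$ collapses the per-group error to $\delta_l := \Pr[y=1 | v=1, \ell=l] - \Pr[y=1 | \hat{v}=1, \ell=l] = r_l\bigl(\Pr[y=1 | \hat{v}=0, \ell=l] - \Pr[y=1 | \hat{v}=1, \ell=l]\bigr)$. Since both probabilities lie in $[0,1]$, the bracketed difference has absolute value at most $1$, so $|\delta_l| \leq r_l \leq r$.

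Finally, since $G - \hat{G} = \delta_1 - \delta_0$, the triangle inequality yields $|G - \hat{G}| \leq |\delta_1| + |\delta_0| \leq 2r$, which is the claimed bound.

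As for obstacles, there is no genuinely hard step: the argument is a clean mirror image of Theorem~\ref{thm:high-precision}. The only point requiring care is choosing the correct decomposition—one must expand the true term over $\hat{v}$ (not over $v$), because the assumption $y \perp v \mid \hat{v}, \ell$ is precisely what makes the conditioning on $v$ vanish after that expansion, leaving a residual proportional to the miss-rate $r_l$; using the complementary precision decomposition of Eq.~\eqref{eq:intuition2} would not simplify under this hypothesis. It is also worth verifying that bounding the bracketed probability gap by $1$ is what produces the factor $2r$ rather than something sharper, matching the symmetric bound $2p$ of Theorem~\ref{thm:high-precision}.
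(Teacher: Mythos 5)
Your proposal is correct and is essentially the paper's own argument: the paper proves Theorem~\ref{thm:high-recall} by invoking symmetry with Theorem~\ref{thm:high-precision} (``switch the roles of $v$ and $\hat{v}$''), and your explicit derivation—expanding $\Pr[y\!=\!1 | v\!=\!1, \ell\!=\!l]$ over $\hat{v}$, using $y \perp v \mid \hat{v},\ell$ to collapse to $\delta_l = r_l\bigl(\Pr[y\!=\!1 | \hat{v}\!=\!0, \ell\!=\!l] - \Pr[y\!=\!1 | \hat{v}\!=\!1, \ell\!=\!l]\bigr)$, and combining the two groups by the triangle inequality—is exactly that swapped proof carried out in full. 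If anything, your exact factored identity for $\delta_l$ is slightly cleaner than the paper's two one-sided bounds, but it is the same approach.
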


Again, the bound above depends only on the recall and the intuition is similar to that of Theorem~\ref{thm:high-precision}. Analyzing again the equations~\eqref{eq:intuition1} and~\eqref{eq:intuition2}, we can see that~\eqref{eq:intuition2} is simply equal to $\Pr[y=1 | \hat{v}=1, v=1, \ell=0]$, with $p_0$ disappearing. Therefore only false negatives matter in this case which leads to the constraint on the recall. 

Similar to the previous case, one may expect the above condition to hold if the errors of $\hat{v}$ are somewhat random. For example, the condition will hold if every prediction of $\hat{v}$ has a $1-r$ chance of being correct, independently of other data points.

\subsection{Key take-aways.}
We presented in this section some simple cases relying on strong independence assumptions where the estimation error from using $\hat{v}$ can be bounded. The goal was to get the reader familiar with the notations but also to illustrate how the estimation error might be affected by two types of system parameters: the performance of the covariate proxy $\hat{v}$ (precision, recall) as well as the general correlations between parameters ($y, v, \hat{v}, \ell$). 
In the next sections, we present some natural conditions that do no rely on strong assumptions and yet guarantee accurate bias estimation when using $\hat{v}$. In Section \ref{sec:bound-classifier}, we will first bound the estimation error based only on the performance of the classifier $\hat{v}$. Then  in Section \ref{sec:bound-graph}, we will highlight two additional characteristics of the system in terms of ($y, v, \hat{v}, \ell$) that lead to smaller estimation errors. Finally in Section \ref{sec:combining-ABC}, we derive a refined bound that combines all the different considerations.

\section{Bounds based only on classifier performance}
\label{sec:bound-classifier}
Initially, a practitioner might be willing to only make some assumptions on the performance of the proxy $\hat{v}$. In this section, we will study this case and derive the associated bounds.

\paragraph{Main equations.} We first write some important equations that will be useful throughout the rest of the paper. We define
\begin{align}
\delta_0 &= \Pr[y\!=\!1 | v\!=\!1, \ell\!=\!0] - \Pr[y\!=\!1 | \hat{v}\!=\!1, \ell\!=\!0]
\end{align}
From Equation \eqref{eq:intuition1} and \eqref{eq:intuition2} we get that

\begin{align}
\label{eqn:equation_delta}
\begin{split}
\delta_0 &= (p_0 - r_0) \cdot (\Pr[y\!=\!1 | \hat{v}\!=\!1, v\!=\!1, \ell\!=\!0]) \\
         &\;\;\;\; + r_0 \Pr[y\!=\!1 | \hat{v}\!=\!0, v\!=\!1, \ell\!=\!0] \\
         &\;\;\;\; - p_0 \Pr[y\!=\!1 | \hat{v}\!=\!1, v\!=\!0, \ell\!=\!0]
 \end{split}
\end{align} 
In a similar manner we can define $\delta_1$ that is conditioned on the slice $\ell=1$. Eq. \eqref{eqn:equation_delta} represents the estimation error made on the slice $\ell=0$ and has a natural interpretation. The second term is the error generated by not including {\em false negative examples}~($\hat{v}=0, v=1$) into the $\hat{G}$ term, while the third term is the error caused by including incorrectly, the {false positive} examples~($\hat{v}=1, v=0$). The final estimation error is the difference between these two terms:
\begin{align}
\label{eqn:equation_g}
|G - \hat{G}| = |\delta_1 - \delta_0|
\end{align}
It is easy to see that the magnitude of each term in $\delta_0$ and $\delta_1$ depends on the performance of the proxy $\hat{v}$. We next investigate this dependence.

\subsection{Case A}
\label{sec:caseA}
The first condition that we study is when the proxy $\hat{v}$ has both high conditional precision and conditional recall. In this case we can bound the error in the estimates without any additional assumptions.

\begin{theorem}
\label{thm:high-precision-and-recall}
If for any $\ell \in \{0,1\}$ the precision and recall of the proxy $\hat{v}$ is at least $1-\epsobs_A$, then we have that
$$
|G - \hat{G}| \leq 2 \cdot \epsobs_A.
$$
\end{theorem}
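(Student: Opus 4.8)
The plan is to bound the single‑slice errors $\delta_0$ and $\delta_1$ separately and then combine them through the identity \eqref{eqn:equation_g}. Since $|G-\hat G| = |\delta_1-\delta_0| \le |\delta_0| + |\delta_1|$ by the triangle inequality, it suffices to prove the uniform bound $|\delta_l| \le \epsobs_A$ for $l\in\{0,1\}$; the two slices are handled identically, so I would write the argument out only for $\ell=0$. Note that the hypothesis ``precision and recall are at least $1-\epsobs_A$'' translates, via \eqref{def:cond-precision-recall}, to $p_0,r_0 \in [0,\epsobs_A]$ (and likewise $p_1,r_1\in[0,\epsobs_A]$).

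First I would start from \eqref{eqn:equation_delta} and abbreviate the three relevant confusion‑matrix entries as $a_0 = \Pr[y\!=\!1\mid \hat v\!=\!1, v\!=\!1,\ell\!=\!0]$, $b_0 = \Pr[y\!=\!1\mid \hat v\!=\!0, v\!=\!1,\ell\!=\!0]$, and $c_0 = \Pr[y\!=\!1\mid \hat v\!=\!1, v\!=\!0,\ell\!=\!0]$, so that
$$\delta_0 = (p_0 - r_0)\,a_0 + r_0\,b_0 - p_0\,c_0,$$
where $a_0,b_0,c_0 \in [0,1]$. The crucial observation is that $\delta_0$ is an affine function of $(a_0,b_0,c_0)$ over the cube $[0,1]^3$, so its extreme values are attained at vertices, and $\max|\delta_0|$ equals the larger of the sum of the positive coefficients and the sum of magnitudes of the negative coefficients. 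Both of these evaluate to $\max(p_0,r_0)$ rather than $p_0+r_0$, because the shared variable $a_0$ carries the coefficient $p_0-r_0$ and ties the two error terms together. To make this explicit I would regroup: when $p_0 \ge r_0$, write $\delta_0 = r_0(b_0-c_0) + (p_0-r_0)(a_0-c_0)$, and when $r_0 \ge p_0$, write $\delta_0 = p_0(b_0-c_0) + (r_0-p_0)(b_0-a_0)$. In each case every bracket lies in $[-1,1]$ and the nonnegative coefficients sum to $\max(p_0,r_0)$, yielding $|\delta_0| \le \max(p_0,r_0) \le \epsobs_A$. Repeating verbatim for $\ell=1$ gives $|\delta_1|\le\epsobs_A$, and combining produces $|G-\hat G|\le 2\epsobs_A$.

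The hard part is precisely securing $|\delta_0| \le \epsobs_A$ rather than the weaker $|\delta_0|\le p_0+r_0$. A naive triangle inequality applied directly to the three summands of \eqref{eqn:equation_delta} gives only $p_0 + r_0 \le 2\epsobs_A$ per slice, which would propagate to a vacuous $4\epsobs_A$ bound on $|G-\hat G|$. The improvement hinges on recognizing that the false‑positive contribution $-p_0 c_0$ and the false‑negative contribution $r_0 b_0$ are both anchored to the common term $a_0$ and therefore cannot be simultaneously extremal; the regrouping above is exactly what exposes this cancellation and recovers the factor‑of‑two savings that the theorem claims.
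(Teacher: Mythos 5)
Your proof is correct and takes essentially the same route as the paper's: both start from the decomposition \eqref{eqn:equation_delta}, bound each slice error by $|\delta_l| \le \max(p_l, r_l) \le \epsobs_A$, and combine the two slices via the triangle inequality on $|G - \hat{G}| = |\delta_1 - \delta_0|$. Your explicit regrouping into brackets lying in $[-1,1]$ is just a cleaner (and slightly more careful) justification of the same per-slice bound that the paper obtains by dropping the negative term and collapsing the remaining coefficients to $\max(p_0,r_0)$.
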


\begin{proof}
Below, we will bound $\delta_0$ and the analysis for $\delta_1$ will be identical. 

We can write $\delta_0$ as:
\begin{align}
    \label{eqn:main_equation}
    \begin{split}
    \delta_0 &=\Pr[y\!=\!1 | v\!=\!1, \ell\!=\!0] - \Pr[y\!=\!1 | \hat{v}\!=\!1, \ell\!=\!0] \\
    &= (p_0 - r_0) \Pr[y\!=\!1 | v\!=\!1, \hat{v}\!=\!1, \ell\!=\!0]\\
    &\;\;\;\; + r_0 \Pr[y\!=\!1 | v\!=\!1, \hat{v}\!=\!0, \ell\!=\!0]\\
    &\;\;\;\; -p_0 \Pr[y\!=\!1 | v\!=\!0, \hat{v}\!=\!1, \ell\!=\!0].
     \end{split}
\end{align}

Since the third term above is negative, and the probabilities are bounded by $1$, we can conclude that $\delta_0$  is at most $p_0$. Similarly, the negation, i.e., $-\delta_0$ is bounded by $r_0$. Hence we get that
\begin{align*}
    \Big| \Pr[y\!=\!1 | v\!=\!1, \ell\!=\!0] - \Pr[y\!=\!1 | \hat{v}\!=\!1, \ell\!=\!0] \Big| &\leq \max(p_0, r_0) \\
    &\leq \epsobs_A
\end{align*}
\end{proof}

Note that precision and recall have to be relatively high so that the bound above can be small enough. For instance, even if the proxy has precision and recall around $0.9$, the above bound will be around $0.2$, which would likely be too large for fairness evaluation in practical settings. The goal of the next two sections will be to progressively refine this bound by analyzing potential correlations of ($y, v, \hat{v}, \ell$).

\section{Alternate bounds based on other correlations.}
\label{sec:bound-graph}

In many cases requiring high precision and recall on the proxy may not be achievable. Can one still expect the estimation error to be small? We next characterize other conditions based on correlations between the different parameters~($y, v, \hat{v}, \ell$) that enable us to bound the error. We will study cases B1 and B2, that identify two different canceling effects under which the error might be small, despite not having high precision and recall. In order to understand these it would be beneficial to refer to the confusion matrix as shown in Table~\ref{tbl:confusion-matrix}.

\subsection{Intuition for the cases.}
Recall $\delta_\ell$ defined in Equation~\eqref{eqn:equation_delta} as
\begin{align*}
\begin{split}
\delta_{l} &= (p_l - r_l) \cdot (\Pr[y\!=\!1 | \hat{v}\!=\!1, v\!=\!1, \ell\!=\!l]) \\
         &\;\;\;\; + r_l \Pr[y\!=\!1 | \hat{v}\!=\!0, v\!=\!1, \ell\!=\!l] \\
         &\;\;\;\; - p_l \Pr[y\!=\!1 | \hat{v}\!=\!1, v\!=\!0, \ell\!=\!l]
 \end{split}
\end{align*} 

This equation above represents the estimation error made on the slice $\ell$. Additionally, the final estimation error is equal to the difference between the errors made on each slice:
\begin{align*}
|G - \hat{G}| = |\delta_1 - \delta_0|
\end{align*}

In the previous section, we bounded each term in $\delta_0$ and $\delta_1$ by relying on the performance of the proxy $\hat{v}$ (case A). Below we discuss alternate natural conditions that would suffice for good gap estimation:
\begin{itemize}
    \item [B1.] This case captures scenarios when the second and third term in $\delta_0$ (similarly for $\delta_1$) approximately cancel each other, indicating that the contribution of examples wrongly ignored~(false negative) is approximately equal to the examples wrongly included~(false positive). Additionally, the first term is small.
    \item [B2.] This case captures scenarios when $\delta_0$ and $\delta_1$ have similar values and therefore ``cancel each other,'' indicating that the correlations~(between $y, \hat{v}, v$) have a similar behavior for $\ell=0$ and $\ell=1$.
\end{itemize}
In the rest of the section, we will prove theorems that materialize the above intuition rigorously. We will decouple the effect of the the performance of the proxy~(precision and recall) on the estimation error from the properties of the confusion matrix.
%under $\epsun$.

\subsection{Case B1: Precision close to recall and closeness of diagonals}
\label{sec:caseB}
In this case we do not assume that the proxy has high precision and recall, but instead simply assume that the precisions and recalls~(conditioned on $\ell$) are close to each other, up to $\epsobs_B$. Clearly this assumption by itself is not enough to guarantee a small error. We also assume  that the diagonal entries in the confusion matrix of Table~\ref{tbl:confusion-matrix} are $\epsun_B$-close to each other. This is made precise in the definition below.

\begin{definition}[Closeness of Diagonals]
There exists a small $0 < \epsun_{B1} \leq 1$ such that
\begin{align*}
    \Big|\Pr[y\!=\!1 | v\!=\!1, \hat{v}\!=\!0, \ell\!=\!0] - \Pr[y\!=\!1 | v\!=\!0, \hat{v}\!=\!1, \ell\!=\!0]\Big|\\
    \leq \epsun_{B1} \\
    \Big|\Pr[y\!=\!1 | v\!=\!1, \hat{v}\!=\!0, \ell\!=\!1] - \Pr[y\!=\!1 | v\!=\!0, \hat{v}\!=\!1, \ell\!=\!1]\Big|\\
    \leq \epsun_{B1} 
    \end{align*}
\end{definition}
One can expect the above condition to hold in certain scenarios. Notice that for any $\ell$, the condition concerns the disagreement region, i.e., $\{v=1, \hat{v}=0\}$, and $\{v=0, \hat{v}=1\}$. In many cases this region could represent ``hard to predict'' points. The closeness on the diagonal is that the outcome $y$ has similar behavior for both errors types of $\hat{v}$. Returning to the example from Section~\ref{sec:intro} on predicting toxicity of online comments with the topic being the covariate~($v$), the classifier $\hat{v}$ may find it hard to predict certain esoteric topics well. It is then natural to expect that the toxicity prediction model may also perform poorly for these topics and its errors may be somewhat random, thereby making $\epsilon_{B_1}$ small. There may also be many cases where one can indeed not expect $\eps_{B_1}$ to be small. However, we will soon see in Section~\ref{sec:combining-ABC} a more refined bound that has a lower order dependence on such errors. Hence, even a weak constraint~(on $\eps_{B_1}$) suffices for getting smaller bounds.

Under the closeness of diagonal condition we have the following theorem.
\begin{theorem}
\label{thm:condition-B}
Let $\epsun_{B_1} \in [0,1]$ be the smallest constant such that the closeness of diagonal condition holds with $\epsun_{B1}$ and that $|r_0 - p_0|, |r_1 - p_1|$ are bounded by $\epsobs_{B1}$ then we have that
$$
|G - \hat{G}| \leq 2(\epsobs_{B1} + \epsun_{B1}).
$$
\end{theorem}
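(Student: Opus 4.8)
The plan is to bound each slice error $\delta_0$ and $\delta_1$ separately by $\epsobs_{B1} + \epsun_{B1}$, and then combine them through the triangle inequality applied to $|G - \hat{G}| = |\delta_1 - \delta_0|$. To lighten the notation, I would write $a_l = \Pr[y=1 \mid \hat{v}=1, v=1, \ell=l]$, $b_l = \Pr[y=1 \mid \hat{v}=0, v=1, \ell=l]$ and $c_l = \Pr[y=1 \mid \hat{v}=1, v=0, \ell=l]$, so that the expression for $\delta_l$ coming from Equation~\eqref{eqn:equation_delta} reads $\delta_l = (p_l - r_l) a_l + r_l b_l - p_l c_l$. The two hypotheses say exactly that $|p_l - r_l| \leq \epsobs_{B1}$ and that the diagonal gap satisfies $|b_l - c_l| \leq \epsun_{B1}$; the entire point of the algebra is to rewrite $\delta_l$ so that both of these small quantities appear as explicit factors.

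The key manipulation is a regrouping of the middle two terms: adding and subtracting $p_l b_l$ gives $r_l b_l - p_l c_l = (r_l - p_l) b_l + p_l(b_l - c_l)$. Substituting this back and collecting the $(p_l - r_l)$ coefficients yields the clean factorization
$$\delta_l = (p_l - r_l)(a_l - b_l) + p_l(b_l - c_l).$$
Now each summand is a product of a controlled factor with a bounded one: since $a_l, b_l \in [0,1]$ we have $|a_l - b_l| \leq 1$, and since $p_l$ is one minus a precision it lies in $[0,1]$. Hence $|\delta_l| \leq |p_l - r_l|\cdot 1 + 1 \cdot |b_l - c_l| \leq \epsobs_{B1} + \epsun_{B1}$, and the identical argument gives the same bound for both $l=0$ and $l=1$.

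Finally, the triangle inequality gives $|G - \hat{G}| = |\delta_1 - \delta_0| \leq |\delta_0| + |\delta_1| \leq 2(\epsobs_{B1} + \epsun_{B1})$, which is the claimed bound. There is no genuine obstacle here once the factorization is in hand; the only real ingenuity is spotting that adding and subtracting $p_l b_l$ simultaneously surfaces both the precision--recall gap $p_l - r_l$ and the diagonal gap $b_l - c_l$, so that the ``large'' probabilities $a_l, b_l, c_l$ enter only through the difference $a_l - b_l$ (trivially at most $1$ in absolute value) and never on their own. A cruder route would bound the three terms of $\delta_l$ individually and hope for cancellation in $\delta_1 - \delta_0$, but the factorization is cleaner and makes the separate roles of $\epsobs_{B1}$ and $\epsun_{B1}$ fully transparent, which is the narrative the paper is after.
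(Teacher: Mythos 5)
Your proof is correct and takes essentially the same route as the paper's: the paper also starts from Equation~\eqref{eqn:equation_delta} and uses the diagonal-closeness hypothesis to replace the off-diagonal term $c_l$ by $b_l$ (up to $\epsun_{B1}$), which produces exactly your factorization $(p_l-r_l)(a_l-b_l)+p_l(b_l-c_l)$ and the per-slice bound $\epsobs_{B1}+\epsun_{B1}$, followed by the same triangle inequality on $|\delta_1-\delta_0|$. Your write-up is if anything slightly cleaner, since you carry out the regrouping as an exact identity and bound $|\delta_l|$ in absolute value, whereas the paper argues one direction via a one-sided substitution (and its intermediate display contains a minor typo).
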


As Compared to Theorem~\ref{thm:high-precision-and-recall}, case B1 does not require an absolute bound on the precisions and recalls and quantifies the error in terms of their relative closeness~($\gamma_{B_1}$). This can be advantageous in some cases; the quantities captured by $\epsobs_{B1}$ might be smaller than $\epsobs_A$.

\begin{proof}
As before we will establish bounds for both $\delta_0$ and $\delta_1$. Let's consider $\delta_0$. From the analysis in the previous section we have that
\begin{align*}
\begin{split}
    &\Pr[y\!=\!1 | v\!=\!1, \ell\!=\!0] - \Pr[y\!=\!1 | \hat{v}\!=\!1, \ell\!=\!0] \\
    &= (p_0 - r_0) \Pr[y\!=\!1 | v\!=\!1, \hat{v}\!=\!1, \ell\!=\!0]\\
    &\;\;\;\; + r_0 \Pr[y\!=\!1 | v\!=\!1, \hat{v}\!=\!0, \ell\!=\!0] \\
    &\;\;\;\; -p_0 \Pr[y\!=\!1 | v\!=\!0, \hat{v}\!=\!1, \ell\!=\!0]\\
    &\leq (p_0 - r_0) \Pr[y\!=\!1 | v\!=\!1, \hat{v}\!=\!1, \ell\!=\!0]\\
    &\;\;\;\; + r_0 \Pr[y\!=\!1 | v\!=\!1, \hat{v}\!=\!0, \ell\!=\!0]\\
    &\;\;\;\; - p_0 r_0 \Pr[y\!=\!1 | v\!=\!1, \hat{v}\!=\!0, \ell\!=\!0] + \epsun r_0\\
    &\leq |p_0 - r_0|  + \epsun r_0 \leq \epsobs_{B1} + \epsun_{B1}.
\end{split}
\end{align*}

A similar analysis establishes that $\delta_1 \leq \epsobs_{B1} + \epsun_{B1}$.
\end{proof}

\subsection{Case B2: Closeness of precisions/recalls and closeness of models}
\label{sec:caseC}

The previous case identified conditions under which both $\delta_0$ and $\delta_1$ achieve small values, leading to a small estimation error. This error depends on the closeness of precisions and recalls within each slice. However, in practice this may not always hold and one may prefer an alternate condition. We next identify another natural condition that allows for precision and recall within each slice to be arbitrary, and both $\delta_0$ and $\delta_1$ could be large. However, one may still expect the estimation error~($|G - \hat{G}|$) to be small if the errors in $\delta_0$ and $\delta_1$ have a cancelling effect. We present such a condition next.

\begin{definition}[Model Closeness]
\label{def:model-closeness}
There exists $g > 0$ and a $0 < \epsun_{B2} \leq 1$ such that for all $b,c \in \{0,1\}$,
\begin{align*}
&\Pr[y\!=\!1 | v\!=\!b, \hat{v}\!=\!c, \ell\!=\!1] \\
&= \Pr[y\!=\!1 | v\!=\!b, \hat{v}\!=\!c, \ell\!=\!0] + g \pm \epsun_{B2}.
\end{align*}
\end{definition}

In terms of the confusion matrix defined in Table~\ref{tbl:confusion-matrix} the above condition requires that the entries of the confusion matrix for $\ell=1$ are noisy translations of the entries for $\ell=0$, $g$ representing here the gap between groups. 

Notice that case B2 allows for arbitrary behavior within each slice~($\ell$). Hence, for the estimation error to be small, some assumption is needed on the correlations across slices and the model closeness condition in Definition~\ref{def:model-closeness} captures that. In practice one may expect the condition to hold if the proxy $\hat{v}$ introduces similar errors across slices. As an example, in the context of the toxicity prediction example, consider a proxy~($\hat{v}$) that was (unintentionally) trained on a corpus consisting of only one group, say $\ell=1$. When applying $\hat{v}$ to the group $\ell=0$ one can expect $\hat{v}$ to introduce systematic errors~(captured by $g$ in the definition).

In this case we have the following theorem
\begin{theorem}
\label{thm:condition-C}
Let $\epsobs_{B2}, \epsun_{B2} \in [0,1]$ and $g$ be such that the model closeness holds with $\epsun_{B2}$ and that $|r_0 - r_1|, |p_0 - p_1|$ are bounded by $\epsobs_{B2}$. Then we have that
$$
|G - \hat{G}| \leq 2 \cdot \epsobs_{B2} + 3 \cdot \epsun_{B2}. 
$$
\end{theorem}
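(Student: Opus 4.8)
The plan is to work directly from the identity $|G - \hat{G}| = |\delta_1 - \delta_0|$ in \eqref{eqn:equation_g} together with the closed form for each $\delta_l$ in \eqref{eqn:equation_delta}. To lighten notation I would write, for a slice $\ell = l$, the three relevant confusion-matrix entries as $A_l = \Pr[y=1 \mid v=1, \hat{v}=1, \ell=l]$, $B_l = \Pr[y=1 \mid v=1, \hat{v}=0, \ell=l]$ and $C_l = \Pr[y=1 \mid v=0, \hat{v}=1, \ell=l]$, so that $\delta_l = (p_l - r_l)A_l + r_l B_l - p_l C_l$. The model-closeness hypothesis in Definition~\ref{def:model-closeness} says precisely that $A_1 = A_0 + g \pm \epsun_{B2}$, and likewise $B_1 = B_0 + g \pm \epsun_{B2}$ and $C_1 = C_0 + g \pm \epsun_{B2}$, so my first step is to substitute these three relations into the expression for $\delta_1$.

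The crucial observation — and the reason the final bound carries no dependence on the (possibly large) systematic gap $g$ — is that the coefficients of the three entries in $\delta_l$ sum to zero: $(p_1 - r_1) + r_1 - p_1 = 0$. Hence when I replace $A_1, B_1, C_1$ by $A_0 + g, B_0 + g, C_0 + g$, setting aside the $\pm\epsun_{B2}$ fluctuations for the moment, all three copies of $g$ cancel, and $\delta_1$ collapses to $(p_1 - r_1)A_0 + r_1 B_0 - p_1 C_0$ plus an error term. That error term is $(p_1 - r_1)\alpha + r_1\beta - p_1\gamma$, where $\alpha,\beta,\gamma$ are the three fluctuations of magnitude at most $\epsun_{B2}$; since each weight $p_1 - r_1$, $r_1$, $p_1$ lies in $[-1,1]$, the triangle inequality bounds this error by $3\epsun_{B2}$.

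Next I would subtract $\delta_0 = (p_0 - r_0)A_0 + r_0 B_0 - p_0 C_0$ and regroup the surviving terms by the common factors $A_0, B_0, C_0$. A short rearrangement gives $\delta_1 - \delta_0 = (p_1 - p_0)(A_0 - C_0) - (r_1 - r_0)(A_0 - B_0)$ up to the error of size at most $3\epsun_{B2}$. Because $A_0, B_0, C_0 \in [0,1]$, the two differences $A_0 - C_0$ and $A_0 - B_0$ are each bounded in magnitude by $1$; invoking the hypotheses $|p_1 - p_0|, |r_1 - r_0| \le \epsobs_{B2}$ then yields $|\delta_1 - \delta_0| \le 2\epsobs_{B2} + 3\epsun_{B2}$, which is the claimed bound.

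The main obstacle is not any individual estimate — each is a one-line triangle-inequality bound — but rather organizing the algebra so that the $g$-cancellation is manifest and so that the $\pm\epsun_{B2}$ fluctuations are propagated with the correct weights. A careless grouping could spuriously couple $g$ to the precision/recall gaps and destroy the clean separation into an $\epsobs_{B2}$ part and an $\epsun_{B2}$ part. I would therefore keep the $g$-cancellation substitution and the regrouping by $A_0, B_0, C_0$ as two explicit, clearly delimited steps, so that the zero-sum-of-coefficients cancellation is visible before any bounding is applied.
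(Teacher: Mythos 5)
Your proof is correct and takes essentially the same route as the paper's: both substitute the model-closeness relations into the decomposition $\delta_l = (p_l - r_l)A_l + r_l B_l - p_l C_l$, exploit the fact that the coefficients $(p_l - r_l) + r_l - p_l$ sum to zero so that the systematic gap $g$ cancels, and then bound the surviving terms using $|p_1 - p_0|, |r_1 - r_0| \le \epsobs_{B2}$ and the $\pm\epsun_{B2}$ fluctuations. If anything, your explicit regrouping into $(p_1 - p_0)(A_0 - C_0) - (r_1 - r_0)(A_0 - B_0)$ makes visible the step the paper's proof leaves implicit when it asserts $T_1 + T_2 + T_3 \leq 2\epsobs_{B2} + 3\epsun_{B2}$, since a naive term-by-term bound on the coefficients of $A_0, B_0, C_0$ would only give $4\epsobs_{B2}$.
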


Note that in the above Theorem we do not require precisions and recalls within any slice to be high.

\begin{proof}
We would not be able to separately analyze $\delta_0$ and $\delta_1$ since both could be high. Instead we argue about the difference. We have
\begin{align}
    \delta_0 - \delta_1 &= T_1 + T_2 + T_3,
\end{align}
where
\begin{align*}
    T_1 &= (p_0 - r_0) \Pr[y\!=\!1 | v\!=\!1, \hat{v}\!=\!1, \ell\!=\!0]\\
    &\;\;\;\;- (p_1 - r_1) \Pr[y\!=\!1 | v\!=\!1, \hat{v}\!=\!1, \ell\!=\!1],
\end{align*}
\begin{align*}
    T_2 &= r_0 \Pr[y\!=\!1 | v\!=\!1, \hat{v}\!=\!0, \ell\!=\!0]\\
    &\;\;\;\;- r_1 \Pr[y\!=\!1 | v\!=\!1, \hat{v}\!=\!0, \ell\!=\!1],
\end{align*}
and
\begin{align*}
    T_3 &= p_1 \Pr[y\!=\!1 | v\!=\!0, \hat{v}\!=\!1, \ell\!=\!1]\\
    &\;\;\;\;- p_0 \Pr[y\!=\!1 | v\!=\!0, \hat{v}\!=\!1, \ell\!=\!0].
\end{align*}

Since $|r_0 - r_1|, |p_0 - p_1|$ are both bounded by $\epsobs_{B2}$ and model closeness holds we have
\begin{align*}
    T_1 &= (p_0 - r_0 - p_1 + r_1) \Pr[y\!=\!1 | v\!=\!1, \hat{v}\!=\!1, \ell\!=\!0] \\
    &\;\;\;\;- (p_1 - r_1) g \pm \epsun_{B2},
\end{align*}
\begin{align*}
    T_2 &= (r_0 - r_1) \Pr[y\!=\!1 | v\!=\!1, \hat{v}\!=\!0, \ell\!=\!0]\\
    &\;\;\;\;- r_1 g \pm \epsun_{B2},
\end{align*}
and
\begin{align*}
    T_3 &= (p_1-p_0) \Pr[y\!=\!1 | v\!=\!0, \hat{v}\!=\!1, \ell\!=\!0]\\
    &\;\;\;\;+ p_1 g \pm \epsun_{B2}
\end{align*}

Adding the above we get that $T_1 + T_2 + T_3 \leq 2 \cdot \epsobs_{B2} + 3 \cdot \epsun_{B2}$. Similarly, the negation $\delta_1 - \delta_0$ can be bounded.
\end{proof}

\section{Synthesis: A refined bound.}
\label{sec:combining-ABC}

So far we have presented three natural conditions under which the error in the estimation can be bounded. We recap them below as:
\begin{itemize}
    \item [A.] The first bound uses only the performance of the proxy $\hat{v}$, therefore needs high precision and recall.
    \item [B1.] The second bound depends on the precision being close to the recall, in addition to the closeness of diagonals.
    \item [B2.] The third bound depends on the closeness of precision and recalls between slices and the relationship between the confusion matrices of the two slices.
\end{itemize}

The bounds earlier concern each of the above conditions in isolation. As a result, they are likely to be wide except unless one of the constraint is very strong, which might not be realistic. In this section we study whether one can combine the power of the three conditions to refine the error bound that naturally adapts to the extent to which the above conditions are satisfied. Towards that end we have the following theorem.

\begin{theorem}
\label{thm:combine-ABC}
Let $\epsobs_A$, ($\epsobs_{B1}, \epsun_{B1}$) and ($\epsobs_{B2}, \epsun_{B2}$) be the errors up to which conditions A, $B1$ and $B2$ above hold. Then we have that 

\begin{align*}
|G-\hat{G}| & \leq 2 \min(\epsobs_A, \epsobs_{B1}, \epsobs_{B2}) \\
                  &+ \epsun_{B2} \cdot(2\epsobs_{A} + \epsobs_{B1}) + \epsun_{B1} \cdot \epsobs_{B1}
\end{align*}
\end{theorem}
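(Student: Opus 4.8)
The plan is to reuse the three-term decomposition $\delta_0-\delta_1=T_1+T_2+T_3$ from the proof of Theorem~\ref{thm:condition-C}, but to replace the crude constant-$1$ bounds used there by the sharper bounds that conditions A and B1 make available. Throughout write $a_l,b_l,c_l$ for the confusion-matrix entries $\Pr[y\!=\!1\mid v\!=\!1,\hat v\!=\!1,\ell\!=\!l]$, $\Pr[y\!=\!1\mid v\!=\!1,\hat v\!=\!0,\ell\!=\!l]$ and $\Pr[y\!=\!1\mid v\!=\!0,\hat v\!=\!1,\ell\!=\!l]$, and set $\alpha=p_0-r_0$, $\beta=p_1-r_1$. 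As in Case~B2 I would first apply model closeness, $a_1=a_0+g\pm\epsun_{B2}$ and likewise for $b$ and $c$, to rewrite $T_1+T_2+T_3$ purely in terms of the slice-$0$ entries. The crucial algebraic fact, already present in that proof, is that the coefficients of $g$, namely $-(p_1-r_1)$, $-r_1$ and $p_1$, sum to zero; hence the systematic offset $g$ cancels and only a residual main term $S$ in $a_0,b_0,c_0$ survives, together with the model-closeness error.

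The first refinement is to sharpen that model-closeness error. In the standalone Case~B2 each of the three $\pm\epsun_{B2}$ contributions is bounded by $1\cdot\epsun_{B2}$, producing the $3\epsun_{B2}$ of Theorem~\ref{thm:condition-C}. But the quantities multiplying these errors are exactly $|p_1-r_1|$, $r_1$ and $p_1$. Invoking B1 to bound $|p_1-r_1|\le\epsobs_{B1}$ and A to bound $r_1,p_1\le\epsobs_A$ collapses this contribution to $(2\epsobs_A+\epsobs_{B1})\,\epsun_{B2}$, which is precisely the second summand of the claim. This is the step that turns a standalone linear-in-$\epsun_{B2}$ error into a genuine second-order product.

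It then remains to bound the residual $S=(\alpha-\beta)a_0+(r_0-r_1)b_0+(p_1-p_0)c_0$, whose three coefficients sum to zero. I would exploit this by writing $S$ as a combination of differences of slice-$0$ entries, $S=(\alpha-\beta)(a_0-c_0)+(r_0-r_1)(b_0-c_0)$. For the first piece the leading factor $\alpha-\beta=(p_0-r_0)-(p_1-r_1)$ enjoys three \emph{simultaneous} bounds: $|\alpha-\beta|\le2\epsobs_A$ from A (which forces $p_l,r_l\le\epsobs_A$), $|\alpha-\beta|\le2\epsobs_{B1}$ from B1 (which forces $|p_l-r_l|\le\epsobs_{B1}$), and, rewriting $\alpha-\beta=(p_0-p_1)-(r_0-r_1)$, $|\alpha-\beta|\le2\epsobs_{B2}$ from B2. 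Since $|a_0-c_0|\le1$, taking the best of the three gives the leading $2\min(\epsobs_A,\epsobs_{B1},\epsobs_{B2})$ term. For the second piece the diagonal-closeness condition gives $|b_0-c_0|\le\epsun_{B1}$, and the gap in front of it is a small precision/recall difference, so this contributes the remaining lower-order term of the form $\epsobs_{B1}\cdot\epsun_{B1}$. A symmetric argument bounds $\delta_1-\delta_0$, and combining the two directions yields the stated inequality.

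The hard part is not any single estimate but the bookkeeping that keeps every correction genuinely second order. One must check the $g$-cancellation exactly, verify that each occurrence of $\epsun_{B1}$ and $\epsun_{B2}$ is paired with a linear $\epsobs$ factor rather than a constant, and confirm that the leading factor $|\alpha-\beta|$ really is controlled by all three conditions at once so that the $\min$ is legitimate. The most delicate point is identifying the exact $\epsobs$ attached to $\epsun_{B1}$: the natural grouping above puts the \emph{cross-slice} gap $|r_0-r_1|$ (equivalently $|p_0-p_1|$) in front of $b_0-c_0$, so one must either regroup $S$ so that a within-slice gap governed by $\epsobs_{B1}$ appears there, or simply note that any such gap is $O(\epsobs)$ and hence the term is lower order, matching the final summand.
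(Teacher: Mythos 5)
Your proposal is correct and follows essentially the same route as the paper's proof: the same $T_1+T_2+T_3$ decomposition, the same sharpening of the model-closeness errors to $\epsun_{B2}\cdot(2\epsobs_A+\epsobs_{B1})$ by pairing each $\pm\epsun_{B2}$ with its coefficient $|p_1-r_1|$, $r_1$, $p_1$, and the same regrouping of the zero-sum residual into a leading term bounded by $2\min(\epsobs_A,\epsobs_{B1},\epsobs_{B2})$ times a confusion-matrix difference plus a diagonal-closeness correction (your pivot on $\Pr[y\!=\!1\,|\,v\!=\!0,\hat{v}\!=\!1,\ell\!=\!0]$ versus the paper's pivot on $\Pr[y\!=\!1\,|\,v\!=\!1,\hat{v}\!=\!0,\ell\!=\!0]$ is immaterial). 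The ``delicate point'' you flag is real but is shared by the paper's own proof: there the coefficient multiplying $\epsun_{B1}$ is the cross-slice gap $|p_1-p_0|$ (yours is $|r_0-r_1|$), which condition B2 bounds by $\epsobs_{B2}$ rather than $\epsobs_{B1}$, yet the paper writes $\epsobs_{B1}\cdot\epsun_{B1}$; so your closing observation that this summand is in any case a second-order $\epsobs\cdot\epsun_{B1}$ product is the honest resolution, and your argument needs no repair beyond that relabeling.
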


This theorem combines the three conditions highlighted before and achieves a refined bound. In particular, the dependence on the parameters governing the precision/recall of the classifier~($\gamma$) is linear, whereas in contrast to earlier theorems, the errors due to other correlations~($\epsilon$) have a multiplying effect. Hence, in practice even a weak estimate of the parameters $\epsilon_{B_1}$ and $\epsilon_{B_2}$ may suffice to get an estimate of how good $\hat{G}$ is. 

\vspace*{2mm}

\begin{proof}
Recalling the definitions of $T_1, T_2$ and $T_3$ from the proof of Theorem~\ref{thm:condition-C} we get that
\begin{align*}
    T_1 &= (p_0 - r_0 - p_1 + r_1) \Pr[y\!=\!1 | v\!=\!1, \hat{v}\!=\!1, \ell\!=\!0]\\
    &\;\;\;\;- (p_1 - r_1) g \pm \epsun_{B2} \cdot (p_1 - r_1),
\end{align*}
\begin{align*}
    T_2 &= (r_0 - r_1) \Pr[y\!=\!1 | v\!=\!1, \hat{v}\!=\!0, \ell\!=\!0]\\
    &\;\;\;\;- r_1 g \pm \epsun_{B2}\cdot r_1,
\end{align*}
and
\begin{align*}
    T_3 &= (p_1-p_0) \Pr[y\!=\!1 | v\!=\!0, \hat{v}\!=\!1, \ell\!=\!0]\\
    &\;\;\;\; + p_1 g \pm \epsun_{B2} \cdot p_1.
\end{align*}
Adding up we get that
\begin{align*}
    &T_1 + T_2 + T_3 \\
    &\leq (p_0 - r_0 - p_1 + r_1) \Pr[y\!=\!1 | v\!=\!1, \hat{v}\!=\!1, \ell\!=\!0]\\
    &\;\;\;\; + (r_0 - r_1) \Pr[y\!=\!1 | v\!=\!1, \hat{v}\!=\!0, \ell\!=\!0] \\
    &\;\;\;\; + (p_1-p_0) \Pr[y\!=\!1 | v\!=\!0, \hat{v}\!=\!1, \ell\!=\!0]\\
    &\;\;\;\; + \epsun_{B2} \cdot (2\epsobs{A}+ \epsobs_{B1})\\
    &\leq (p_0 - r_0 - p_1 + r_1) \Pr[y\!=\!1 | v\!=\!1, \hat{v}\!=\!1, \ell\!=\!0]\\
    &\;\;\;\; + (r_0 - r_1) \Pr[y\!=\!1 | v\!=\!1, \hat{v}\!=\!0, \ell\!=\!0] \\
    &\;\;\;\; + (p_1-p_0) \Pr[y\!=\!1 | v\!=\!1, \hat{v}\!=\!0, \ell\!=\!0] + \epsun_{B2} (2 \epsobs_{A} + \epsobs_{B1}) \\
    &\;\;\;\;+\epsobs_{B1} \cdot \epsun_{B1}\\
    &\leq (p_0 - r_0 - p_1 + r_1) + \epsun_{B2} \cdot(2 \epsobs_{A}+ \epsobs_{B1}) \\
    &\;\;\;\;+\epsobs_{B1} \cdot \epsun_{B1}.
\end{align*}
The bound follows from noticing that
$$
p_0 - r_0 - p_1 + r_1 \leq 2 \min(\epsobs_{A},\epsobs_{B1},\epsobs_{B2}).
$$
\end{proof}
Compared to Theorem~\ref{thm:high-precision-and-recall}, the first term in the bound above may be significantly smaller than $2\epsobs_A$. Furthermore, in contrast to Theorem~\ref{thm:condition-B} and Theorem~\ref{thm:condition-C}, the dependence on $\epsilon_{B_1}$ and $\epsilon_{B_2}$ is diminished due to multiplication by other error terms. Due to this cancelling effect and the linear dependency on the the parameters related to the proxy performance~($\epsobs$), our theory recommends that the focus of a practitioner should then be on minimizing the first term. We next validate our theory via experiments on simulated data.

\section{Simulations}    
\label{sec:experiments}
We proposed in Section~\ref{sec:bound-classifier} an initial bound based solely on the classifier performance and then highlighted that some constraints over the confusion matrix~(captured by $\epsun$) lead to errors canceling each other. 

In this section, we use simulations to further validate our theory. The findings of this section are summarized below.
\begin{itemize}
\item We will observe that our theoretical bounds from Section~\ref{sec:caseA} that are based solely on the classifier performance represent worst-case scenarios, and that the gap in most cases is significantly smaller than our upper bounds. In particular, the $95$th percentile of the distribution of errors is typically more than $2\times$ smaller.

\item We will demonstrate the importance of making {\em some} assumptions on the error parameters via $\epsun$, instead of simply relying on classifier performance via $\epsobs$. The constraint on the parameter values~($\epsilon$) rules out some of these worst-case scenarios.

\item Finally, we will show that even weak constraints over the parameters~(with $\epsun$) are sufficient to significantly reduce the bound.
\end{itemize}

\subsection{Simulated estimation errors are often smaller than the theoretical bounds.}

\begin{figure}
  \centering
  \includegraphics[width=1.1\linewidth]{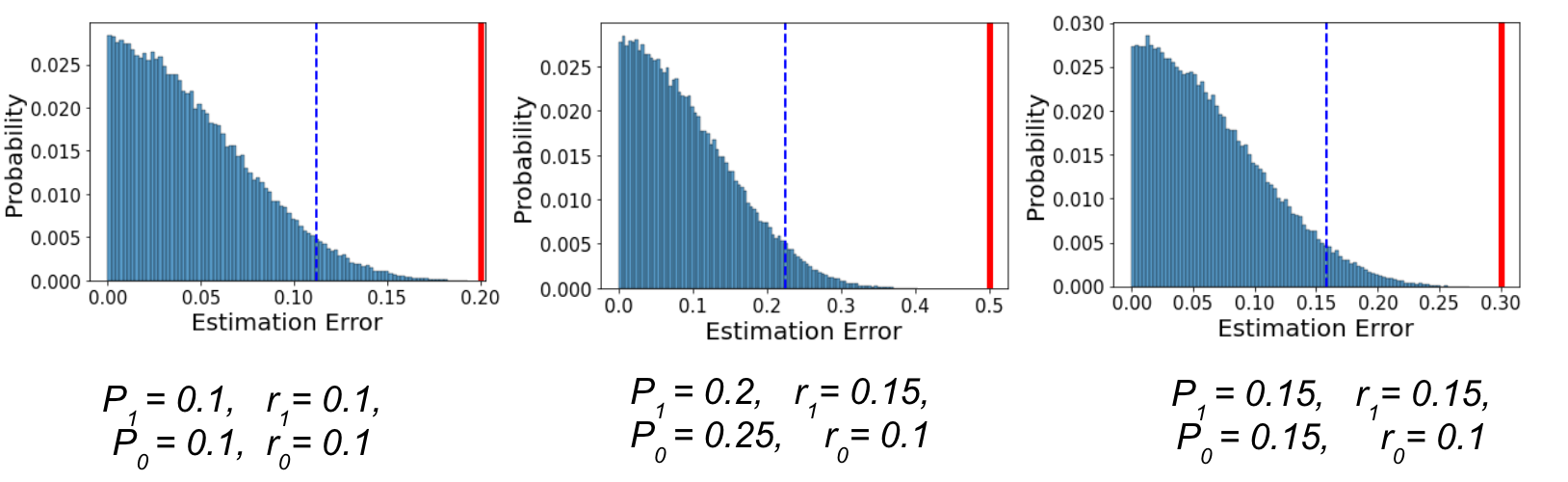}
  \caption{Simulated distribution of estimation errors. Each graph represent a different choice of the precision and recall of the proxy. The dashed blue line represent $95$th-percentiles and the unbroken red one is the theoretical bound from Theorem~\ref{thm:high-precision-and-recall}.
  }
  \label{graphA}
  \vspace{-4mm}
\end{figure}

In order to compute the estimation error, we rely on Equation~\eqref{eqn:main_equation} where we need to define both the classifier performance and the confusion matrix from Table~\ref{tbl:confusion-matrix}. We first illustrate the scenario where we only control for classifier performance~(case A) and therefore do not enforce any constraints on the values in the confusion matrix from Table~\ref{tbl:confusion-matrix}.

\noindent \textbf{Setup:} 
We first set classifier-related quantities ($p_1$, $r_1$, $p_0$ and $r_0$). Note that we display results for a few choice of these parameters, however the observations hold across any choices made for those quantities.

Once we set the above quantities,  we generate the confusion matrix, aka the six parameters, $\Pr[y\!=\!1 | v\!=\!i, \hat{v}\!=\!j, \ell\!=\!l]$, for $i, j, l \in \{0,1\}$, by sampling from a uniform prior in [0,1], $U[0,1]$. This reflects the situation where no prior structure is made on the confusion matrix. Note that $\Pr[y\!=\!1 | v\!=\!0, \hat{v}\!=\!0, \ell\!=\!l]$ are not needed. 
\\

\noindent \textbf{Results:}
The results are displayed in Figure~\ref{graphA}. First, we confirm that the bound provided by Theorem~\ref{thm:high-precision-and-recall} is correct. More importantly, we observe that these bounds reflect worst-case scenarios and that the errors are typically significantly smaller: indeed, the $95$th percentile is approximately $2\times$ smaller than the upper bound. This phenomena is due to the fact that the different terms in Equation~\eqref{eqn:main_equation} cancel each other leading to significantly lower errors than worst-case scenario.

This above setup uses as a uniform prior over parameters and can lead to unrealistic scenarios that might be responsible for worst-case estimation errors. In the next sections will focus on more realistic scenarios by restricting the range of values of the parameters following the principles highlighted by our theory. This essentially constraints the possible values that the confusion matrix in Table~\ref{tbl:confusion-matrix} can take.

\subsection{Importance of assuming structure over the confusion matrix via $\epsun$.} 

In this section, we assume that we have a classifier with relatively good precision and recall (for instance $r_i, p_i \leq$ 0.1). The bound from \ref{thm:high-precision-and-recall} is still too large for a good estimate of the error~(in this case close to 0.2), and we show that assuming some structure over the correlations is key to getting better estimates.

\noindent \textbf{Setup:} To simulate more realistic scenarios, we now enforce some level of structure controlled by $\epsun$. As before, we first select a well performing classifier ($p_1= 0.07 $, $r_1= 0.09$, $p_0=0.05$ and $r_0=0.1$). We will report experiments only for this choice but the results are consistent across different choices. Then, we set a value $\epsun_{B1}$ and $\epsun_{B2}$, which reflects to what extent the contraints B1~(closeness of diagonals) and B2~(model closeness) hold. For instance, $\epsun_{B1}=0$ means equal parameters and $\epsun_{B1}=1$ is equivalent to having no structure.

We sample the parameters as follows:
\begin{itemize}
    \item $g \sim U[-1, 1]$, where $U$ denotes the uniform distribution.
    \item $\Pr[y\!=\!1 | v\!=\!1, \hat{v}\!=\!1, \ell\!=\!l] \sim U[0, 1-g]$, if $g \geq 0$, else $\sim U[|g|, 1]$. 
    \item $\Pr[y\!=\!1 | v\!=\!1, \hat{v}\!=\!0, \ell\!=\!l] \sim U[0, 1-g]$ if $g \geq 0$, else $\sim U[|g|, 1]$.
    \item $ \Pr[y\!=\!1 | v\!=\!0, \hat{v}\!=\!1, \ell\!=\!l] = \Pr[y\!=\!1 | v\!=\!1, \hat{v}\!=\!0, \ell\!=\!l] + U[-\epsilon_{B2}, \epsilon_{B2}]$.
    \item $\Pr[y\!=\!1 | v\!=\!i, \hat{v}\!=\!j, \ell\!=\!l] = \Pr[y\!=\!1 | v\!=\!i, \hat{v}\!=\!j, \ell\!=\!l] + g + U[-\epsilon_{B2}, + \epsilon_{B2}]$.
\end{itemize}
Note that the above sampling might not lead to valid probabilities (outside of [0,1]). If that happens, we restart again and sample all the parameters. This method ensures sampling the confusion matrix in Table~\ref{tbl:confusion-matrix} that satisfies our constraints.

We compute both the true and estimated gap~($G$ and $\hat{G}$) and report the distribution of errors~($error = | G - \hat{G} | $) for $N=100000$ independent runs.
\\

\begin{figure}[htbp]
  \centering
  \includegraphics[width=1.1\linewidth]{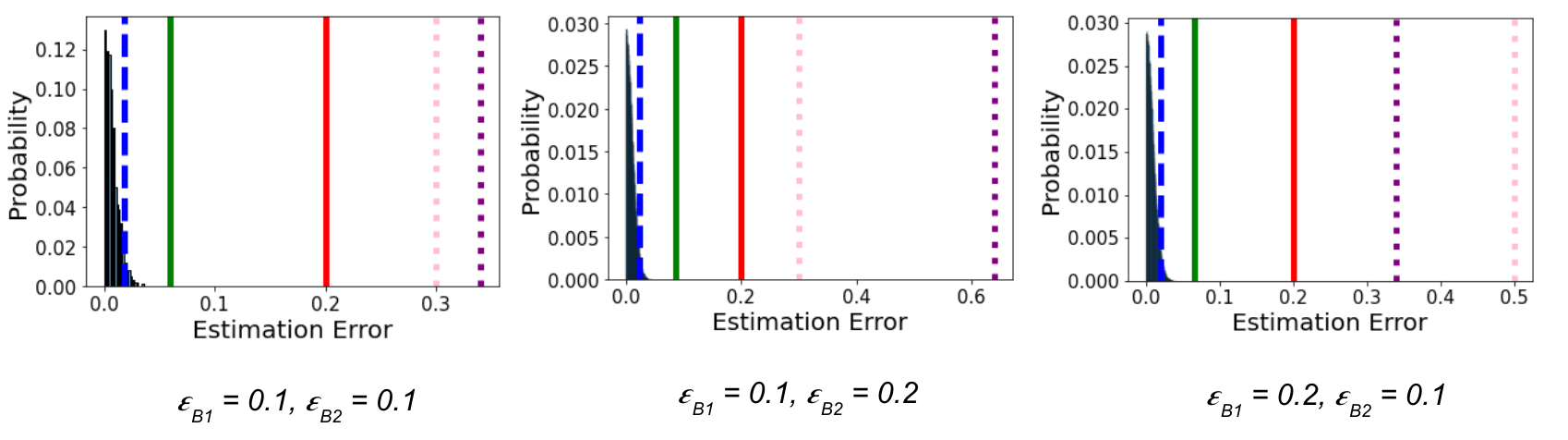}
  \caption{Refined bound from Theorem \ref{thm:combine-ABC} dominates other bounds. Each graph represent a different choice of ($\epsun_{B1}, \epsun_{B2}$). The dashed blue line represents the $95$th percentile of the simulated errors. The red (solid) line comes from Theorem~\ref{thm:high-precision-and-recall}. The pink~(resp. purple) line comes from Theorem~\ref{thm:condition-B}~(resp Theorem~\ref{thm:condition-C}). Finally, the green line is given by the refined bound from Theorem~\ref{thm:combine-ABC}.}
  \label{graph2}
  \vspace{-4mm}
\end{figure}

\noindent \textbf{Results:} We report in Figure~\ref{graph2} the histogram of the errors, as well as the various theoretical bounds. The dashed blue line represents the $95$th percentile of the simulated errors. The red (solid) line comes from Theorem~\ref{thm:high-precision-and-recall} and represents the case when the practitioner does not make any assumption on the structure of the confusion matrix. The pink~(resp. purple) line comes from Theorem~\ref{thm:condition-B}~(resp Theorem~\ref{thm:condition-C}) which focuses on one constraint on the confusion matrix. Finally, the green line is given by the refined bound from Theorem~\ref{thm:combine-ABC}.

We observe that the bounds from Case A, B1 or B2 are all quite wide~(typically above $0.2$ in this scenario). Note that their ordering in the plots may differ depending on how we set each constraint. More importantly, the bound from Theorem \ref{thm:combine-ABC} is significantly better than any of the individual ones thereby illustrating that by considering all the different sources of error, we are able to compute better estimates. In particular, it is important to assume some structure over the confusion matrix, through $\epsun$, in order to prevent worst-case scenarios.

\subsection{Even weak assumptions on $\epsun$ are enough to significantly reduce the bound.}
The previous experiment highlights the importance of adding some constraints on potential values of the confusion matrix from Table~\ref{tbl:confusion-matrix}. However it might be hard to assume strong conditions~(i.e., $\epsun$ close to $0$). In this section, we demonstrate that even loose estimates are enough to significantly improve our estimates.

\begin{figure}[htbp]
  \centering
  \includegraphics[width=1.0\linewidth]{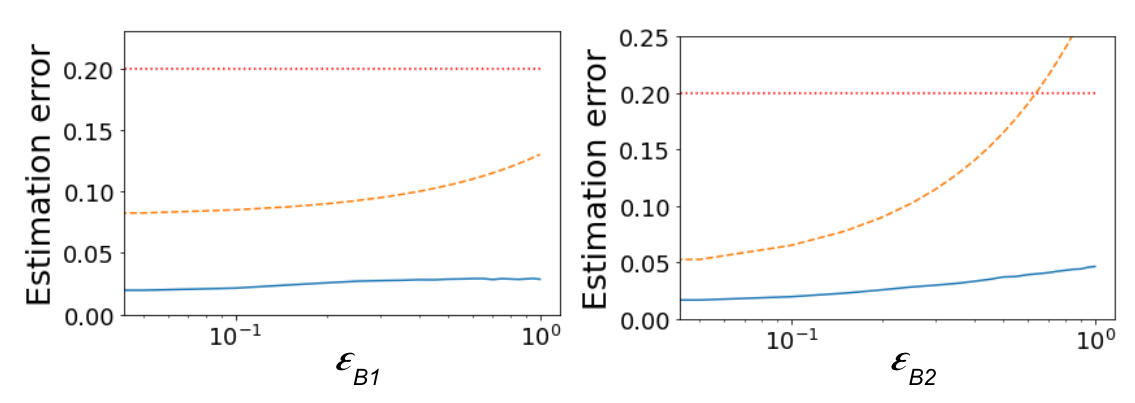}
  \caption{On the left (resp. right) plot, we set $\epsun_{B2}$ (resp. $\epsun_{B1}$) and vary $\epsun_{B1}$ (resp. $\epsun_{B2}$). The blue lines represent the $95$th-percentile of the estimation errors. The refined bound from Theorem~\ref{thm:combine-ABC} (dashed orange) dominates the one from Theorem~\ref{thm:high-precision-and-recall} (dotted red), even with weak constraints ($\epsun$ does not have to be small).}
  \label{graph3}
  \vspace{-4mm}
\end{figure}

\textbf{Setup:}
To that purpose, we study how the estimation errors evolve as we progressively strengthen the condition on the correlations. More precisely, we take a good classifier with $p_1= 0.07 $, $r_1= 0.09$, $p_0=0.05$ and $r_0=0.1$~(arbitrary values for sake of simplicity). We then set one of~($\epsun_{B1}$, $\epsun_{B2}$) to some fixed value~(here $0.2$) and vary the other one from $0$~(very strong condition) to $1$~(no condition). Then we use the same approach as in the previous experiment to sample all the remaining variables. We finally analyze the resulting estimation errors, as well as theoretical bounds.

\noindent \textbf{Results:} Results are displayed in Figure~\ref{graph3}. The blue lines represent the $95$th-percentile of the estimation errors from the simulation, the dashed orange line is the theoretical bound from Theorem~\ref{thm:combine-ABC} and the dotted red one from case A, i.e., Theorem~\ref{thm:high-precision-and-recall}. 

We can see that, even in this case where the covariate classifier is relatively good ($r_i, p_i \leq 0.1$), the bound from Theorem~\ref{thm:high-precision-and-recall} remains quite large $(0.2)$. However the refined bound in Theorem~\ref{thm:combine-ABC} leads to a tighter estimate ~(except in the case when $\epsun_{B2}$ becomes close to $1$ in the right plot). Furthermore, both the theoretical guarantees and empirical results significantly improve when the enforced conditions become quantitatively stronger.
More importantly, these graphs demonstrate that Theorem~\ref{thm:combine-ABC} does not require strong constraints to achieve a satisfying bound. For instance, the left graph shows that the theoretical bound is still lower than $0.1$ if $\epsun_{B2} \leq 0.4$~(a relatively weak condition). Additionally the $95$th percentile is between $.02$ and $.03$, which shows even better estimate in practice.

This demonstrates that {\em some} assumptions on $\epsun$, even if weak, is key in getting good estimates of the error.

\section{Conclusion}
\label{sec:conclusions}
Measuring the fairness of a model might often be restricted by the data available. We analyze in this paper the common scenario of estimating a fairness metric via a proxy covariate. We identify various sources of errors that affect the use of the proxy covariate in measuring model fairness. Our theory  demonstrates that the errors can be driven down not only by the performance of the proxy such as precision and recall, but also by the underlying correlations in the data distribution. 
We provided a refined analysis that combines the various sources of errors and highlights how these errors interact to affect the final bound. More importantly, our work shows that one does not need to make strong independence assumptions in order to study and obtain bounds for the problem of measuring model fairness via proxy covariates.

Finally we demonstrate through simulations that exploiting structure in the correlations, even if loose, is the key to obtaining better bounds on the error in measuring the fairness metrics. 
We believe that our work will guide practitioners towards making informed choices when designing proxy covariates for fairness evaluation, or obtaining bounds on the errors in their estimation procedures.

\section{Acknowledgements:} The authors would like to thank Alexander D'Amour for his valuable feedback on this paper.

%\bibliographystyle{alpha}
%\bibliography{main}

\appendix

\section{Proofs from Section~\ref{sec:simple-cases}}

\begin{proof}[Proof of Theorem~\ref{thm:strong-independence}]
Define, 
\begin{align*}
    \label{def:delta}
    \delta_0 & \coloneqq \Pr[y\!=\!1 | v\!=\!1, \ell\!=\!0] - \Pr[y\!=\!1 | \hat{v}\!=\!1, \ell\!=\!0]\\
    \delta_1 & \coloneqq \Pr[y\!=\!1 | v\!=\!1, \ell\!=\!1] - \Pr[y\!=\!1 | \hat{v}\!=\!1, \ell\!=\!1].
\end{align*}
Under the independence assumption it is easy to see that
\begin{align*}
    \Pr[y\!=\!1 | v\!=\!1,  \ell\!=\!0] = \Pr[y\!=\!1 | \hat{v}\!=\!1, \ell\!=\!0].
\end{align*}
This implies that $\delta_0=0$, and similarly that $\delta_1=0$. Hence in this case the error in our estimates is zero. 
\end{proof}

\begin{proof}[Proof of Theorem~\ref{thm:high-precision}]
We will show that both $\delta_0$ and $\delta_1$ are bounded in magnitude by at most $p$ thereby establishing the theorem. We next show how to bound $\delta_0$. The analysis for $\delta_1$ will be identical. Recall that 
$$
\delta_0 = \Pr[y\!=\!1 | v\!=\!1, \ell\!=\!0] - \Pr[y\!=\!1 | \hat{v}\!=\!1, \ell\!=\!0].
$$
Under the independence assumption we have
\begin{align*}
\delta_0 &= \Pr[y\!=\!1 | v\!=\!1, \ell\!=\!0]\\
&\;\;\;\;- \Pr[y\!=\!1 | v\!=\!1, \ell\!=\!0]\Pr[v\!=\!1 | \hat{v}\!=\!1, \ell\!=\!0]\\
&\;\;\;\;- \Pr[y\!=\!1 | v\!=\!0, \ell\!=\!0]\Pr[v\!=\!0 | \hat{v}\!=\!1, \ell\!=\!0]\\
&\leq \Pr[y\!=\!1 | v\!=\!1, \ell\!=\!0]\\
&\;\;\;\;- \Pr[y\!=\!1 | v\!=\!1, \ell\!=\!0]\Pr[v\!=\!1 | \hat{v}\!=\!1, \ell\!=\!0]\\
&= \Pr[y\!=\!1 | v\!=\!1, \ell\!=\!0] \big(1 - \Pr[v\!=\!1 | \hat{v}\!=\!1, \ell\!=\!0] \big)\\
&\leq r
\end{align*}
Similarly, we get that
\begin{align*}
-\delta_0 &=  \Pr[y\!=\!1 | v\!=\!1, \ell\!=\!0]\Pr[v\!=\!1 | \hat{v}\!=\!1, \ell\!=\!0]\\
&\;\;\;\;+ \Pr[y\!=\!1 | v\!=\!0, \ell\!=\!0]\Pr[v\!=\!0 | \hat{v}\!=\!1, \ell\!=\!0]\\
&\;\;\;\;- \Pr[y\!=\!1 | v\!=\!1, \ell\!=\!0]\\
&= \Pr[y\!=\!1 | v\!=\!1, \ell\!=\!0] \big(\Pr[v\!=\!1 | \hat{v}\!=\!1, \ell\!=\!0]-1 \big)\\
&\;\;\;\;+ \Pr[y\!=\!1 | v\!=\!0, \ell\!=\!0]\Pr[v\!=\!0 | \hat{v}\!=\!1, \ell\!=\!0]\\
&= \Pr[y\!=\!1 | v\!=\!1, \ell\!=\!0] \Pr[v\!=\!0 | \hat{v}\!=\!1, \ell\!=\!0]\\
&\;\;\;\;+ \Pr[y\!=\!1 | v\!=\!0, \ell\!=\!0]\Pr[v\!=\!0 | \hat{v}\!=\!1, \ell\!=\!0]\\
&= \Pr[v\!=\!0 | \hat{v}\!=\!1, \ell\!=\!0]\\
&\leq r.
\end{align*}
\end{proof}
\begin{proof}[Proof of Theorem~\ref{thm:high-recall}]
The proof is exactly identical to the proof of Theorem~\ref{thm:high-precision} above by simply switching the role of $v$ and $\hat{v}$.
\end{proof}

\bibliographystyle{ACM-Reference-Format}
\bibliography{main}

%%% -*-BibTeX-*-
%%% Do NOT edit. File created by BibTeX with style
%%% ACM-Reference-Format-Journals [18-Jan-2012].

\begin{thebibliography}{35}

%%% ====================================================================
%%% NOTE TO THE USER: you can override these defaults by providing
%%% customized versions of any of these macros before the \bibliography
%%% command.  Each of them MUST provide its own final punctuation,
%%% except for \shownote{}, \showDOI{}, and \showURL{}.  The latter two
%%% do not use final punctuation, in order to avoid confusing it with
%%% the Web address.
%%%
%%% To suppress output of a particular field, define its macro to expand
%%% to an empty string, or better, \unskip, like this:
%%%
%%% \newcommand{\showDOI}[1]{\unskip}   % LaTeX syntax
%%%
%%% \def \showDOI #1{\unskip}           % plain TeX syntax
%%%
%%% ====================================================================

\ifx \showCODEN    \undefined \def \showCODEN     #1{\unskip}     \fi
\ifx \showDOI      \undefined \def \showDOI       #1{#1}\fi
\ifx \showISBNx    \undefined \def \showISBNx     #1{\unskip}     \fi
\ifx \showISBNxiii \undefined \def \showISBNxiii  #1{\unskip}     \fi
\ifx \showISSN     \undefined \def \showISSN      #1{\unskip}     \fi
\ifx \showLCCN     \undefined \def \showLCCN      #1{\unskip}     \fi
\ifx \shownote     \undefined \def \shownote      #1{#1}          \fi
\ifx \showarticletitle \undefined \def \showarticletitle #1{#1}   \fi
\ifx \showURL      \undefined \def \showURL       {\relax}        \fi
% The following commands are used for tagged output and should be
% invisible to TeX
\providecommand\bibfield[2]{#2}
\providecommand\bibinfo[2]{#2}
\providecommand\natexlab[1]{#1}
\providecommand\showeprint[2][]{arXiv:#2}

\bibitem[\protect\citeauthoryear{Akoglu, McGlohon, and Faloutsos}{Akoglu
  et~al\mbox{.}}{2010}]%
        {akoglu2010oddball}
\bibfield{author}{\bibinfo{person}{Leman Akoglu}, \bibinfo{person}{Mary
  McGlohon}, {and} \bibinfo{person}{Christos Faloutsos}.}
  \bibinfo{year}{2010}\natexlab{}.
\newblock \showarticletitle{Oddball: Spotting anomalies in weighted graphs}. In
  \bibinfo{booktitle}{\emph{Pacific-Asia conference on knowledge discovery and
  data mining}}. Springer, \bibinfo{pages}{410--421}.
\newblock


\bibitem[\protect\citeauthoryear{Awasthi, Beutel, Kleindessner, Morgenstern,
  and Wang}{Awasthi et~al\mbox{.}}{2021}]%
        {awasthi2021faact}
\bibfield{author}{\bibinfo{person}{P. Awasthi}, \bibinfo{person}{A. Beutel},
  \bibinfo{person}{M. Kleindessner}, \bibinfo{person}{J. Morgenstern}, {and}
  \bibinfo{person}{X. Wang}.} \bibinfo{year}{2021}\natexlab{}.
\newblock \showarticletitle{Evaluating Fairness of Machine Learning Models
  Under Uncertain and Incomplete Information}. In \bibinfo{booktitle}{\emph{ACM
  Conference on Fairness, Accountability, and Transparency (FAccT)}}.
\newblock


\bibitem[\protect\citeauthoryear{Barocas, Hardt, and Narayanan}{Barocas
  et~al\mbox{.}}{2018}]%
        {barocas-hardt-narayanan}
\bibfield{author}{\bibinfo{person}{S. Barocas}, \bibinfo{person}{M. Hardt},
  {and} \bibinfo{person}{A. Narayanan}.} \bibinfo{year}{2018}\natexlab{}.
\newblock \bibinfo{booktitle}{\emph{Fairness and Machine Learning}}.
\newblock \bibinfo{publisher}{fairmlbook.org}.
\newblock
\newblock
\shownote{\url{http://www.fairmlbook.org}.}


\bibitem[\protect\citeauthoryear{Beutel, Chen, Doshi, Qian, Woodruff, Luu,
  Kreitmann, Bischof, and Chi}{Beutel et~al\mbox{.}}{2019}]%
        {beutel2019putting}
\bibfield{author}{\bibinfo{person}{Alex Beutel}, \bibinfo{person}{Jilin Chen},
  \bibinfo{person}{Tulsee Doshi}, \bibinfo{person}{Hai Qian},
  \bibinfo{person}{Allison Woodruff}, \bibinfo{person}{Christine Luu},
  \bibinfo{person}{Pierre Kreitmann}, \bibinfo{person}{Jonathan Bischof}, {and}
  \bibinfo{person}{Ed~H Chi}.} \bibinfo{year}{2019}\natexlab{}.
\newblock \showarticletitle{Putting fairness principles into practice:
  Challenges, metrics, and improvements}. In
  \bibinfo{booktitle}{\emph{Proceedings of the 2019 AAAI/ACM Conference on AI,
  Ethics, and Society}}. \bibinfo{pages}{453--459}.
\newblock


\bibitem[\protect\citeauthoryear{Chen, Kallus, Mao, Svacha, and Udell}{Chen
  et~al\mbox{.}}{2019a}]%
        {chen2019fairness}
\bibfield{author}{\bibinfo{person}{Jiahao Chen}, \bibinfo{person}{Nathan
  Kallus}, \bibinfo{person}{Xiaojie Mao}, \bibinfo{person}{Geoffry Svacha},
  {and} \bibinfo{person}{Madeleine Udell}.} \bibinfo{year}{2019}\natexlab{a}.
\newblock \showarticletitle{Fairness under unawareness: Assessing disparity
  when protected class is unobserved}. In \bibinfo{booktitle}{\emph{Proceedings
  of the Conference on Fairness, Accountability, and Transparency}}.
\newblock


\bibitem[\protect\citeauthoryear{Chen, Kallus, Mao, Svacha, and Udell}{Chen
  et~al\mbox{.}}{2019b}]%
        {chen_fairness_under_unawareness}
\bibfield{author}{\bibinfo{person}{J. Chen}, \bibinfo{person}{N. Kallus},
  \bibinfo{person}{X. Mao}, \bibinfo{person}{G. Svacha}, {and}
  \bibinfo{person}{M. Udell}.} \bibinfo{year}{2019}\natexlab{b}.
\newblock \showarticletitle{Fairness Under Unawareness: Assessing Disparity
  When Protected Class Is Unobserved}. In \bibinfo{booktitle}{\emph{Conference
  on Fairness, Accountability, and Transparency (ACM FAT*)}}.
\newblock


\bibitem[\protect\citeauthoryear{Chouldechova}{Chouldechova}{2017}]%
        {chouldechova2017fair}
\bibfield{author}{\bibinfo{person}{Alexandra Chouldechova}.}
  \bibinfo{year}{2017}\natexlab{}.
\newblock \showarticletitle{Fair prediction with disparate impact: A study of
  bias in recidivism prediction instruments}.
\newblock \bibinfo{journal}{\emph{Big data}} \bibinfo{volume}{5},
  \bibinfo{number}{2} (\bibinfo{year}{2017}), \bibinfo{pages}{153--163}.
\newblock


\bibitem[\protect\citeauthoryear{Corbett{-}Davies, Pierson, Feller, Goel, and
  Huq}{Corbett{-}Davies et~al\mbox{.}}{2017}]%
        {Corbett17}
\bibfield{author}{\bibinfo{person}{Sam Corbett{-}Davies}, \bibinfo{person}{Emma
  Pierson}, \bibinfo{person}{Avi Feller}, \bibinfo{person}{Sharad Goel}, {and}
  \bibinfo{person}{Aziz Huq}.} \bibinfo{year}{2017}\natexlab{}.
\newblock \showarticletitle{Algorithmic decision making and the cost of
  fairness}.
\newblock \bibinfo{journal}{\emph{CoRR}}  \bibinfo{volume}{abs/1701.08230}
  (\bibinfo{year}{2017}).
\newblock
\showeprint[arxiv]{1701.08230}
\urldef\tempurl%
\url{http://arxiv.org/abs/1701.08230}
\showURL{%
\tempurl}


\bibitem[\protect\citeauthoryear{Coston, Ramamurthy, Wei, Varshney, Speakman,
  Mustahsan, and Chakraborty}{Coston et~al\mbox{.}}{2019}]%
        {coston2019}
\bibfield{author}{\bibinfo{person}{Amanda Coston},
  \bibinfo{person}{Karthikeyan~N Ramamurthy}, \bibinfo{person}{Dennis Wei},
  \bibinfo{person}{Kush~R Varshney}, \bibinfo{person}{Skyler Speakman},
  \bibinfo{person}{Zairah Mustahsan}, {and} \bibinfo{person}{Supriyo
  Chakraborty}.} \bibinfo{year}{2019}\natexlab{}.
\newblock \showarticletitle{Fair Transfer Learning with Missing Protected
  Attributes}. In \bibinfo{booktitle}{\emph{AAAI / ACM Conference on Artificial
  Intelligence, Ethics, and Society}}.
\newblock


\bibitem[\protect\citeauthoryear{Dieterich, Mendoza, and Brennan}{Dieterich
  et~al\mbox{.}}{2016}]%
        {dieterich2016}
\bibfield{author}{\bibinfo{person}{W. Dieterich}, \bibinfo{person}{C. Mendoza},
  {and} \bibinfo{person}{T. Brennan}.} \bibinfo{year}{2016}\natexlab{}.
\newblock \bibinfo{booktitle}{\emph{{COMPAS} Risk Scales: Demonstrating
  Accuracy Equity and Predictive Parity}}.
\newblock \bibinfo{type}{{T}echnical {R}eport}.
  \bibinfo{institution}{Northpointe Inc.}
\newblock
\newblock
\shownote{\url{https://www.equivant.com/response-to-propublica-demonstrating-accuracy-equity-and-predictive-parity/}.}


\bibitem[\protect\citeauthoryear{Dixon, Li, Sorensen, Thain, and
  Vasserman}{Dixon et~al\mbox{.}}{2018}]%
        {Dixon}
\bibfield{author}{\bibinfo{person}{Lucas Dixon}, \bibinfo{person}{John Li},
  \bibinfo{person}{Jeffrey Sorensen}, \bibinfo{person}{Nithum Thain}, {and}
  \bibinfo{person}{Lucy Vasserman}.} \bibinfo{year}{2018}\natexlab{}.
\newblock \showarticletitle{Measuring and Mitigating Unintended Bias in Text
  Classification}. In \bibinfo{booktitle}{\emph{Proceedings of the 2018
  AAAI/ACM Conference on AI, Ethics, and Society}} (New Orleans, LA, USA)
  \emph{(\bibinfo{series}{AIES '18})}. \bibinfo{publisher}{ACM},
  \bibinfo{address}{New York, NY, USA}, \bibinfo{pages}{67--73}.
\newblock
\showISBNx{978-1-4503-6012-8}
\urldef\tempurl%
\url{https://doi.org/10.1145/3278721.3278729}
\showDOI{\tempurl}


\bibitem[\protect\citeauthoryear{Dwork, Hardt, Pitassi, Reingold, and
  Zemel}{Dwork et~al\mbox{.}}{2012}]%
        {dwork2012fairness}
\bibfield{author}{\bibinfo{person}{Cynthia Dwork}, \bibinfo{person}{Moritz
  Hardt}, \bibinfo{person}{Toniann Pitassi}, \bibinfo{person}{Omer Reingold},
  {and} \bibinfo{person}{Richard Zemel}.} \bibinfo{year}{2012}\natexlab{}.
\newblock \showarticletitle{Fairness through awareness}. In
  \bibinfo{booktitle}{\emph{Proceedings of the 3rd innovations in theoretical
  computer science conference}}. \bibinfo{pages}{214--226}.
\newblock


\bibitem[\protect\citeauthoryear{Fogliato, Chouldechova, and G’Sell}{Fogliato
  et~al\mbox{.}}{2020}]%
        {fogliato2020fairness}
\bibfield{author}{\bibinfo{person}{Riccardo Fogliato},
  \bibinfo{person}{Alexandra Chouldechova}, {and} \bibinfo{person}{Max
  G’Sell}.} \bibinfo{year}{2020}\natexlab{}.
\newblock \showarticletitle{Fairness Evaluation in Presence of Biased Noisy
  Labels}. In \bibinfo{booktitle}{\emph{International Conference on Artificial
  Intelligence and Statistics}}. PMLR, \bibinfo{pages}{2325--2336}.
\newblock


\bibitem[\protect\citeauthoryear{Gupta, Cotter, Fard, and Wang}{Gupta
  et~al\mbox{.}}{2018}]%
        {gupta2018}
\bibfield{author}{\bibinfo{person}{Maya Gupta}, \bibinfo{person}{Andrew
  Cotter}, \bibinfo{person}{Mahdi~M Fard}, {and} \bibinfo{person}{Serena
  Wang}.} \bibinfo{year}{2018}\natexlab{}.
\newblock \bibinfo{title}{Proxy Fairness}.  (\bibinfo{year}{2018}).
\newblock
\newblock
\shownote{arXiv:1806.11212 [cs.LG].}


\bibitem[\protect\citeauthoryear{Hardt, Price, and Srebro}{Hardt
  et~al\mbox{.}}{2016}]%
        {hardt2016equality}
\bibfield{author}{\bibinfo{person}{Moritz Hardt}, \bibinfo{person}{Eric Price},
  {and} \bibinfo{person}{Nathan Srebro}.} \bibinfo{year}{2016}\natexlab{}.
\newblock \showarticletitle{Equality of opportunity in supervised learning}. In
  \bibinfo{booktitle}{\emph{Advances in Neural Information Processing
  Systems}}.
\newblock


\bibitem[\protect\citeauthoryear{Hashimoto, Srivastava, Namkoong, and
  Liang}{Hashimoto et~al\mbox{.}}{2018}]%
        {Hashimoto2018}
\bibfield{author}{\bibinfo{person}{T. Hashimoto}, \bibinfo{person}{M.
  Srivastava}, \bibinfo{person}{H. Namkoong}, {and} \bibinfo{person}{P.
  Liang}.} \bibinfo{year}{2018}\natexlab{}.
\newblock \showarticletitle{Fairness Without Demographics in Repeated Loss
  Minimization}. In \bibinfo{booktitle}{\emph{International Conference on
  Machine Learning (ICML)}}.
\newblock
\newblock
\shownote{Code available on \url{https://bit.ly/2sFkDpE}.}


\bibitem[\protect\citeauthoryear{H{\'e}bert-Johnson, Kim, Reingold, and
  Rothblum}{H{\'e}bert-Johnson et~al\mbox{.}}{2017}]%
        {hebert2017calibration}
\bibfield{author}{\bibinfo{person}{{\'U}rsula H{\'e}bert-Johnson},
  \bibinfo{person}{Michael~P Kim}, \bibinfo{person}{Omer Reingold}, {and}
  \bibinfo{person}{Guy~N Rothblum}.} \bibinfo{year}{2017}\natexlab{}.
\newblock \showarticletitle{Calibration for the (Computationally-Identifiable)
  Masses. CoRR abs/1711.08513 (2017)}.
\newblock \bibinfo{journal}{\emph{arXiv preprint arXiv:1711.08513}}
  (\bibinfo{year}{2017}).
\newblock


\bibitem[\protect\citeauthoryear{Horne and Adali}{Horne and Adali}{2017}]%
        {horne2017just}
\bibfield{author}{\bibinfo{person}{Benjamin Horne} {and} \bibinfo{person}{Sibel
  Adali}.} \bibinfo{year}{2017}\natexlab{}.
\newblock \showarticletitle{This just in: Fake news packs a lot in title, uses
  simpler, repetitive content in text body, more similar to satire than real
  news}. In \bibinfo{booktitle}{\emph{Proceedings of the International AAAI
  Conference on Web and Social Media}}, Vol.~\bibinfo{volume}{11}.
\newblock


\bibitem[\protect\citeauthoryear{Hosseini, Kannan, Zhang, and
  Poovendran}{Hosseini et~al\mbox{.}}{2017}]%
        {DBLP:journals/corr/HosseiniKZP17}
\bibfield{author}{\bibinfo{person}{Hossein Hosseini}, \bibinfo{person}{Sreeram
  Kannan}, \bibinfo{person}{Baosen Zhang}, {and} \bibinfo{person}{Radha
  Poovendran}.} \bibinfo{year}{2017}\natexlab{}.
\newblock \showarticletitle{Deceiving Google's Perspective {API} Built for
  Detecting Toxic Comments}.
\newblock \bibinfo{journal}{\emph{CoRR}}  \bibinfo{volume}{abs/1702.08138}
  (\bibinfo{year}{2017}).
\newblock
\showeprint[arxiv]{1702.08138}
\urldef\tempurl%
\url{http://arxiv.org/abs/1702.08138}
\showURL{%
\tempurl}


\bibitem[\protect\citeauthoryear{Jiang and Nachum}{Jiang and Nachum}{2020}]%
        {jiang2020identifying}
\bibfield{author}{\bibinfo{person}{Heinrich Jiang} {and} \bibinfo{person}{Ofir
  Nachum}.} \bibinfo{year}{2020}\natexlab{}.
\newblock \showarticletitle{Identifying and correcting label bias in machine
  learning}. In \bibinfo{booktitle}{\emph{International Conference on
  Artificial Intelligence and Statistics}}. PMLR, \bibinfo{pages}{702--712}.
\newblock


\bibitem[\protect\citeauthoryear{Kallus, Mao, and Zhou}{Kallus
  et~al\mbox{.}}{2020}]%
        {kallus2020}
\bibfield{author}{\bibinfo{person}{Nathan Kallus}, \bibinfo{person}{Xiaojie
  Mao}, {and} \bibinfo{person}{Angela Zhou}.} \bibinfo{year}{2020}\natexlab{}.
\newblock \showarticletitle{Assessing Algorithmic Fairness with Unobserved
  Protected Class Using Data Combination}. In
  \bibinfo{booktitle}{\emph{Proceedings of the 2020 Conference on Fairness,
  Accountability, and Transparency}}.
\newblock


\bibitem[\protect\citeauthoryear{Kearns, Neel, A., and Wu}{Kearns
  et~al\mbox{.}}{2018a}]%
        {kearns2018}
\bibfield{author}{\bibinfo{person}{Michael Kearns}, \bibinfo{person}{S. Neel},
  \bibinfo{person}{Roth A.}, {and} \bibinfo{person}{Z.~S. Wu}.}
  \bibinfo{year}{2018}\natexlab{a}.
\newblock \showarticletitle{Preventing Fairness Gerrymandering: Auditing and
  Learning for Subgroup Fairness}. In \bibinfo{booktitle}{\emph{International
  Conference on Machine Learning (ICML)}}.
\newblock


\bibitem[\protect\citeauthoryear{Kearns, Neel, Roth, and Wu}{Kearns
  et~al\mbox{.}}{2018b}]%
        {kearns2018preventing}
\bibfield{author}{\bibinfo{person}{Michael Kearns}, \bibinfo{person}{Seth
  Neel}, \bibinfo{person}{Aaron Roth}, {and} \bibinfo{person}{Zhiwei~Steven
  Wu}.} \bibinfo{year}{2018}\natexlab{b}.
\newblock \showarticletitle{Preventing fairness gerrymandering: Auditing and
  learning for subgroup fairness}. In \bibinfo{booktitle}{\emph{International
  Conference on Machine Learning}}. PMLR, \bibinfo{pages}{2564--2572}.
\newblock


\bibitem[\protect\citeauthoryear{Kilbertus, Carulla, Parascandolo, Hardt,
  Janzing, and Sch{\"o}lkopf}{Kilbertus et~al\mbox{.}}{2017}]%
        {kilbertus2017avoiding}
\bibfield{author}{\bibinfo{person}{Niki Kilbertus},
  \bibinfo{person}{Mateo~Rojas Carulla}, \bibinfo{person}{Giambattista
  Parascandolo}, \bibinfo{person}{Moritz Hardt}, \bibinfo{person}{Dominik
  Janzing}, {and} \bibinfo{person}{Bernhard Sch{\"o}lkopf}.}
  \bibinfo{year}{2017}\natexlab{}.
\newblock \showarticletitle{Avoiding discrimination through causal reasoning}.
  In \bibinfo{booktitle}{\emph{Advances in Neural Information Processing
  Systems}}. \bibinfo{pages}{656--666}.
\newblock


\bibitem[\protect\citeauthoryear{Kleinberg, Mullainathan, and
  Raghavan}{Kleinberg et~al\mbox{.}}{2017}]%
        {kleinberg2017}
\bibfield{author}{\bibinfo{person}{Jon Kleinberg}, \bibinfo{person}{Sendhil
  Mullainathan}, {and} \bibinfo{person}{Manish Raghavan}.}
  \bibinfo{year}{2017}\natexlab{}.
\newblock \showarticletitle{Inherent Trade-Offs in the Fair Determination of
  Risk Scores}. In \bibinfo{booktitle}{\emph{Innovations in Theoretical
  Computer Science Conference (ITCS)}}.
\newblock


\bibitem[\protect\citeauthoryear{Kumar and Shah}{Kumar and Shah}{2018}]%
        {kumar2018false}
\bibfield{author}{\bibinfo{person}{Srijan Kumar} {and} \bibinfo{person}{Neil
  Shah}.} \bibinfo{year}{2018}\natexlab{}.
\newblock \showarticletitle{False information on web and social media: A
  survey}.
\newblock \bibinfo{journal}{\emph{arXiv preprint arXiv:1804.08559}}
  (\bibinfo{year}{2018}).
\newblock


\bibitem[\protect\citeauthoryear{Kumthekar, Nath, Chi, Chi, Chen, Wei, Hong,
  Sathiamoorthy, Andrews, Yi, and Zhao}{Kumthekar et~al\mbox{.}}{2019}]%
        {WatchNext}
\bibfield{author}{\bibinfo{person}{Aditee~Ajit Kumthekar},
  \bibinfo{person}{Aniruddh Nath}, \bibinfo{person}{Ed~H. Chi},
  \bibinfo{person}{Ed~H. Chi}, \bibinfo{person}{Jilin Chen},
  \bibinfo{person}{Li Wei}, \bibinfo{person}{Lichan Hong},
  \bibinfo{person}{Mahesh Sathiamoorthy}, \bibinfo{person}{Shawn Andrews},
  \bibinfo{person}{Xinyang Yi}, {and} \bibinfo{person}{Zhe Zhao}.}
  \bibinfo{year}{2019}\natexlab{}.
\newblock \showarticletitle{Recommending What Video to Watch Next: A Multitask
  Ranking System}.
\newblock


\bibitem[\protect\citeauthoryear{Lahoti, Beutel, Chen, Lee, Prost, Thain, Wang,
  and Chi}{Lahoti et~al\mbox{.}}{2020}]%
        {lahoti2020fairness}
\bibfield{author}{\bibinfo{person}{Preethi Lahoti}, \bibinfo{person}{Alex
  Beutel}, \bibinfo{person}{Jilin Chen}, \bibinfo{person}{Kang Lee},
  \bibinfo{person}{Flavien Prost}, \bibinfo{person}{Nithum Thain},
  \bibinfo{person}{Xuezhi Wang}, {and} \bibinfo{person}{Ed~H. Chi}.}
  \bibinfo{year}{2020}\natexlab{}.
\newblock \bibinfo{title}{Fairness without Demographics through Adversarially
  Reweighted Learning}.
\newblock
\newblock
\showeprint[arxiv]{2006.13114}~[cs.LG]


\bibitem[\protect\citeauthoryear{Lamy, Zhong, Menon, and Verma}{Lamy
  et~al\mbox{.}}{2019}]%
        {lamy2019}
\bibfield{author}{\bibinfo{person}{Alexandre~L Lamy}, \bibinfo{person}{Ziyuan
  Zhong}, \bibinfo{person}{Aditya~K Menon}, {and} \bibinfo{person}{Nakul
  Verma}.} \bibinfo{year}{2019}\natexlab{}.
\newblock \showarticletitle{Noise-tolerant fair classification}. In
  \bibinfo{booktitle}{\emph{Advances in Neural Information Processing
  Systems}}.
\newblock


\bibitem[\protect\citeauthoryear{Mubarak, Darwish, and Magdy}{Mubarak
  et~al\mbox{.}}{2017}]%
        {mubarak-etal-2017-abusive}
\bibfield{author}{\bibinfo{person}{Hamdy Mubarak}, \bibinfo{person}{Kareem
  Darwish}, {and} \bibinfo{person}{Walid Magdy}.}
  \bibinfo{year}{2017}\natexlab{}.
\newblock \showarticletitle{Abusive Language Detection on {A}rabic Social
  Media}. In \bibinfo{booktitle}{\emph{Proceedings of the First Workshop on
  Abusive Language Online}}. \bibinfo{publisher}{Association for Computational
  Linguistics}, \bibinfo{address}{Vancouver, BC, Canada},
  \bibinfo{pages}{52--56}.
\newblock
\urldef\tempurl%
\url{https://doi.org/10.18653/v1/W17-3008}
\showDOI{\tempurl}


\bibitem[\protect\citeauthoryear{Park, Shin, and Fung}{Park
  et~al\mbox{.}}{2018}]%
        {park-etal-2018-reducing}
\bibfield{author}{\bibinfo{person}{Ji~Ho Park}, \bibinfo{person}{Jamin Shin},
  {and} \bibinfo{person}{Pascale Fung}.} \bibinfo{year}{2018}\natexlab{}.
\newblock \showarticletitle{Reducing Gender Bias in Abusive Language
  Detection}. In \bibinfo{booktitle}{\emph{Proceedings of the 2018 Conference
  on Empirical Methods in Natural Language Processing}}.
  \bibinfo{publisher}{Association for Computational Linguistics},
  \bibinfo{address}{Brussels, Belgium}, \bibinfo{pages}{2799--2804}.
\newblock
\urldef\tempurl%
\url{https://doi.org/10.18653/v1/D18-1302}
\showDOI{\tempurl}


\bibitem[\protect\citeauthoryear{Ritov, Sun, and Zhao}{Ritov
  et~al\mbox{.}}{2017}]%
        {ritov2017conditional}
\bibfield{author}{\bibinfo{person}{Ya'acov Ritov}, \bibinfo{person}{Yuekai
  Sun}, {and} \bibinfo{person}{Ruofei Zhao}.} \bibinfo{year}{2017}\natexlab{}.
\newblock \showarticletitle{On conditional parity as a notion of
  non-discrimination in machine learning}.
\newblock \bibinfo{journal}{\emph{arXiv preprint arXiv:1706.08519}}
  (\bibinfo{year}{2017}).
\newblock


\bibitem[\protect\citeauthoryear{Wainwright and Jordan}{Wainwright and
  Jordan}{2008}]%
        {wainwright2008graphical}
\bibfield{author}{\bibinfo{person}{Martin~J Wainwright} {and}
  \bibinfo{person}{Michael~Irwin Jordan}.} \bibinfo{year}{2008}\natexlab{}.
\newblock \bibinfo{booktitle}{\emph{Graphical models, exponential families, and
  variational inference}}.
\newblock \bibinfo{publisher}{Now Publishers Inc}.
\newblock


\bibitem[\protect\citeauthoryear{Wick, Panda, and Tristan}{Wick
  et~al\mbox{.}}{2019}]%
        {wick2019unlocking}
\bibfield{author}{\bibinfo{person}{Michael Wick}, \bibinfo{person}{Swetasudha
  Panda}, {and} \bibinfo{person}{Jean-Baptiste Tristan}.}
  \bibinfo{year}{2019}\natexlab{}.
\newblock \showarticletitle{Unlocking fairness: a trade-off revisited}.
\newblock  (\bibinfo{year}{2019}).
\newblock


\bibitem[\protect\citeauthoryear{Zafar, Valera, Rodriguez, and Gummadi}{Zafar
  et~al\mbox{.}}{2017}]%
        {zafar2017}
\bibfield{author}{\bibinfo{person}{Muhammad~B Zafar}, \bibinfo{person}{Isabel
  Valera}, \bibinfo{person}{Manuel~G Rodriguez}, {and} \bibinfo{person}{Krishna
  Gummadi}.} \bibinfo{year}{2017}\natexlab{}.
\newblock \showarticletitle{Fairness Constraints: Mechanisms for Fair
  Classification}. In \bibinfo{booktitle}{\emph{International Conference on
  Artificial Intelligence and Statistics (AISTATS)}}.
\newblock


\end{thebibliography}

\end{document}